\documentclass{article} % For LaTeX2e
\usepackage{iclr2023_conference,times}

% Optional math commands from https://github.com/goodfeli/dlbook_notation.
%%%%% NEW MATH DEFINITIONS %%%%%

\usepackage{amsmath,amsfonts,bm}

% Mark sections of captions for referring to divisions of figures

% Highlight a newly defined term

% Figure reference, lower-case.

% Figure reference, capital. For start of sentence

% Section reference, lower-case.

% Section reference, capital.

% Reference to two sections.

% Reference to three sections.

% Reference to an equation, lower-case.
\def\eqref#1{equation~\ref{#1}}
% Reference to an equation, upper case

% A raw reference to an equation---avoid using if possible

% Reference to a chapter, lower-case.

% Reference to an equation, upper case.

% Reference to a range of chapters

% Reference to an algorithm, lower-case.

% Reference to an algorithm, upper case.

% Reference to a part, lower case

% Reference to a part, upper case

\def\1{\bm{1}}

\def\eps{{\epsilon}}

% Random variables

% rm is already a command, just don't name any random variables m

% Random vectors

% Elements of random vectors

% Random matrices

% Elements of random matrices

% Vectors

\def\va{{\bm{a}}}

\def\vc{{\bm{c}}}

\def\vh{{\bm{h}}}

\def\vr{{\bm{r}}}
\def\vs{{\bm{s}}}
\def\vt{{\bm{t}}}

\def\vv{{\bm{v}}}

\def\vx{{\bm{x}}}
\def\vy{{\bm{y}}}
\def\vz{{\bm{z}}}

% Elements of vectors

% Matrix

% Tensor
\DeclareMathAlphabet{\mathsfit}{\encodingdefault}{\sfdefault}{m}{sl}
\SetMathAlphabet{\mathsfit}{bold}{\encodingdefault}{\sfdefault}{bx}{n}

% Graph

% Sets

% Don't use a set called E, because this would be the same as our symbol
% for expectation.

% Entries of a matrix

% entries of a tensor
% Same font as tensor, without \bm wrapper

% The true underlying data generating distribution

% The empirical distribution defined by the training set

% The model distribution

% Stochastic autoencoder distributions

 % Laplace distribution

% Wolfram Mathworld says $L^2$ is for function spaces and $\ell^2$ is for vectors
% But then they seem to use $L^2$ for vectors throughout the site, and so does
% wikipedia.

 % See usage in notation.tex. Chosen to match Daphne's book.

\DeclareMathOperator*{\argmax}{arg\,max}

\usepackage{hyperref}
\usepackage{url}
\usepackage{todonotes}

\usepackage{multirow}
\usepackage{booktabs}
\usepackage{paralist}
\usepackage{array}
\newcolumntype{R}[1]{>{\raggedleft\hspace{0pt}}p{#1}}
\usepackage{amsmath,amsfonts,amsthm,amssymb}
\theoremstyle{definition}
\newtheorem{example}{Example}
\newtheorem{definition}{Definition}
\newtheorem{proposition}{Proposition}

\title{Logical Message Passing Networks with\\ One-hop Inference on Atomic Formulas}
% \title{Logical Message Passing Networks with One-hop Relational Inference}
%Answering Complex Queries by Logical Graph Neural Networks and Pretrained Knowledge Graph Representations
%
%
% Authors must not appear in the submitted version. They should be hidden
% as long as the \iclrfinalcopy macro remains commented out below.
% Non-anonymous submissions will be rejected without review.
\author{Zihao Wang \& Yangqiu Song\\
 CSE, HKUST \\
 Hong Kong SAR\\
 \texttt{\{zwanggc,yqsong\}@cse.ust.hk} \\
 \And
 Ginny Y. Wong \& Simon See \\
 NVIDIA AI Technology Center (NVATIC), NVIDIA\\
  Santa Clara, USA \\
 \texttt{\{gwong,ssee\}@nvidia.com}
}

% The \author macro works with any number of authors. There are two commands
% used to separate the names and addresses of multiple authors: \And and \AND.
%
% Using \And between authors leaves it to \LaTeX{} to determine where to break
% the lines. Using \AND forces a linebreak at that point. So, if \LaTeX{}
% puts 3 of 4 authors names on the first line, and the last on the second
% line, try using \AND instead of \And before the third author name.

\newcommand{\relation}{\ensuremath{\mathcal{R}}}
\newcommand{\entity}{\ensuremath{\mathcal{V}}}
\newcommand{\edge}{\ensuremath{\mathcal{E}}}
\newcommand{\kg}{\ensuremath{\mathcal{KG}}}

\newcommand{\domain}{\ensuremath{\mathcal{D}}}

\newcommand{\update}[1]{{#1}}

\iclrfinalcopy % Uncomment for camera-ready version, but NOT for submission.
\begin{document}

\maketitle

\begin{abstract}
Complex Query Answering (CQA) over Knowledge Graphs (KGs) has attracted a lot of attention to potentially support many applications. Given that KGs are usually incomplete, neural models are proposed to answer the logical queries by parameterizing set operators with complex neural networks.
However, such methods usually train neural set operators with a large number of entity and relation embeddings from the zero, where whether and how the embeddings or the neural set operators contribute to the performance remains not clear.
In this paper, we propose a simple framework for complex query answering that decomposes the KG embeddings from neural set operators.
We propose to represent the complex queries into the query graph.
On top of the query graph, we propose the Logical Message Passing Neural Network (LMPNN) that connects the \textit{local} one-hop inferences on atomic formulas to the \textit{global} logical reasoning for complex query answering.
We leverage existing effective KG embeddings to conduct one-hop inferences on atomic formulas, the results of which are regarded as the messages passed in LMPNN.
The reasoning process over the overall logical formulas is turned into the forward pass of LMPNN that incrementally aggregates local information to finally predict the answers' embeddings.
The complex logical inference across different types of queries will then be learned from training examples based on the LMPNN architecture.
Theoretically, our query-graph representation is more general than the prevailing operator-tree formulation, so our approach applies to a broader range of complex KG queries. Empirically, our approach yields a new state-of-the-art neural CQA model. Our research bridges the gap between complex KG query answering tasks and the long-standing achievements of knowledge graph representation learning. Our implementation can be found at \url{https://github.com/HKUST-KnowComp/LMPNN}.

\end{abstract}

\section{Introduction}

Knowledge Graphs (KG) are essential sources of factual knowledge supporting downstream tasks such as question answering~\citep{zhang2018variational,sun2020faithful,ren2021lego}. 
% commonsense reasoning~\citep{lin2019kagnet}, and recommender systems\citep{wang2019kgat}. \yqc{Can you cite some ICRL/ICML papers for these directions? Also, please cite some famous people.}
Answering logical queries is a complex but important task to utilize the given knowledge~\citep{ren2020beta,ren2021lego}.
% Moreover, complex query execution modules, such as SQL for relational databases and SPARQL for graph databases, have been equipped in modern databases and are widely used in data-centric applications.
Modern Knowledge Graphs (KG)~\citep{freebase,suchanek2007yago,carlson2010toward}, though on a great scale, is usually considered incomplete. This issue is well known as the Open World Assumption (OWA)~\citep{ji2021survey}.
Representation learning methods are employed to mitigate the incompleteness issue by learning representations from the observed KG triples and generalizing them to unseen triples~\citep{bordes2013translating,trouillon2016complex,sun2018rotate,zhang2019quaternion,chami2020low}.
When considering logical queries over \textit{incomplete} knowledge graphs, the query answering models are required to not only predict the unseen knowledge but also execute logical operators, such as conjunction, disjunction, and negation~\citep{ren2020beta,wang2021benchmarking}.
% ~\todo{Give an example for both complex query answering and knowledge graph completion}

Recently, neural models for Complex Query Answering (CQA) have been proposed to complete the unobserved knowledge graph and answer the complex query simultaneously. These models aim to address complex queries that belong to an important subset of the first-order queries. Formally speaking, the complex queries are Existentially quantified First Order queries and has a single free variable (EFO-1)~\citep{wang2021benchmarking} containing logical conjunction, disjunction, and negation~\citep{ren2020beta}.
The EFO-1 queries are transformed in the forms of operator trees, e.g., relational set projection, set intersection, set union, and set complement~\citep{wang2021benchmarking}.
The key idea of these approaches is to represent the entity set into specific embedding spaces~\citep{ren2020beta,zhang2021cone,chen2022fuzzy}. Then, the set operators are parameterized by neural networks~\citep{ren2020beta,amayuelas2022neural,bai2022query2particles}. The strict execution of the set operations can be approximated by learning and conducting continuous mappings over the embedding spaces.

It is observed by experiments that classic KG representation~\citep{trouillon2016complex} can easily outperform the neural CQA models in one-hop queries even though the neural CQA models model the one-hop projection with complex neural networks~\citep{ren2020beta,amayuelas2022neural,bai2022query2particles}. One possible reason is that the neural set projection is sub-optimal in modeling the inherent relational properties, such as symmetry, asymmetry, inversion, composition, etc, which are sufficiently discussed in KG completion tasks and addressed by KG representations~\citep{trouillon2016complex,sun2018rotate}.
On the other hand, Continuous Query Decomposition (CQD)~\citep{arakelyan2021complex} method searches for the best answers with a pretrained KG representation. The logical inference step is modeled as an optimization problem where the continuous truth value of an Existential Positive First Order (EPFO) query is maximized by altering the variable embeddings. However, the speed and the performance of inference heavily rely on the optimization algorithm. It also assumes that the embeddings of entities and relations can reflect higher-order logical relations, which is not generally assumed in existing knowledge graph representation models.
Moreover, it is unclear whether CQD can achieve good performance on complex queries with negation operators~\footnote{Existing empirical evaluations are all conducted on queries without negation~\citep{arakelyan2021complex}}.

In this paper, we aim to answer complex EFO-1 queries by equipping pretrained KG representations with logical inference power.
First, we formulate the EFO-1 KG queries as Disjunctive Normal Form (DNF) formulas and propose to represent the conjunctive queries in the form of query graphs.
%, see Section~\ref{sec:DNF-formulation}.
In the query graph, each edge is an atomic formula that contains a predicate with a (possible) negation operator.
%Instead of using the pretrained KG representation to evaluate the continuous truth value of the \emph{entire} logical formula as~\citet{arakelyan2021complex}, 
%We take advantage of the pretrained KG representation when handling \emph{each} atomic formula in Section~\ref{sec:one-hop-inference}. Specifically, 
For each one-hop atomic formula, we use the pretrained KG representation to infer the intermediate embeddings given the neighboring entity embedding, relation embedding, direction information, and negation information.
We show that the inference can be analytically derived for the KG representation formulation.
The results of one-hop atomic formula inference are interpreted as the logical messages passed from one node to another. Based on this mechanism, we propose a Logical Message Passing Neural Network (LMPNN), where node embeddings are updated by one Multi-Layer Perceptron (MLP) based on aggregated logical messages. LMPNN coordinates the local logical message by pretrained knowledge graph representations and predicts the answer embedding for a complex EFO-1 query. 
Instead of performing on-the-fly optimization over the query graph as CQD~\citep{arakelyan2021complex}, we parameterize the query answering process as the forward pass of LMPNN which is trained from the observed KG query samples.

%Our formulation naturally generalizes the queries solved by CQD~\citep{arakelyan2021complex} by accepting negation operations. However, we show that the directly applying CQD method~\citep{arakelyan2021complex} performs badly on negated queries in Section~\ref{sec:cqd-limitation}.
%Our formulation is also more general than the operator trees~\citep{wang2021benchmarking} used in existing complex query answering models.
Extensive experiments %in Section~\ref{sec:exp} 
show that our approach is a new state-of-the-art neural CQA model, in which only one MLP network and two embedding vectors are trained.
Interestingly, we show that the optimal number of layers of the LMPNN is the largest distance between the free variable node and the constant entity nodes. This makes it easy to generalize our approach to complex queries of arbitrary complexity.
Hence, our approach bridges the gap between complex KG query answering tasks and the long-standing achievements of knowledge graph representation learning.

\vskip-2cm
\section{Related Works}~\label{sec:related}
\vskip-.5cm

\textbf{Knowledge graph representation.} Representing relational knowledge is one of the long-standing topics in representation learning. Knowledge graph representations aim to predict unseen relational triples by representing the discrete symbols in continuous spaces. Various algebraic structures~\citep{bordes2013translating,trouillon2016complex,sun2018rotate,ebisu2018toruse,zhang2019quaternion} are applied to represent the relational patterns~\citep{sun2018rotate} and different geometric spaces~\citep{chami2020low,cao2022geometry} are explored to capture the hierarchical structures in knowledge graphs. Therefore, entities and relations in large knowledge graphs can be efficiently represented in a continuous space.

\textbf{Neural complex query answering.} Most existing works treat the complex queries as operator trees~\citep{ren2020query2box,ren2020beta,wang2021benchmarking}. The query types that can be answered are extended from existential positive first-order (EPFO) queries ~\citep{ren2020query2box,choudhary2021probabilistic,arakelyan2021complex} to the existential first-order~\citep{ren2020beta,zhang2021cone,bai2022query2particles}, or more specifically EFO-1 queries~\citep{wang2021benchmarking}. In a neural CQA model, the entity sets are represented by various forms, including probabilistic distributions~\citep{ren2020beta,choudhary2021probabilistic,bai2022query2particles}, geometric shapes~\citep{ren2020query2box,zhang2021cone}, and fuzzy-logic-inspired representations~\citep{chen2022fuzzy}. In contrast to knowledge graph representations, the relation projections between sets are usually modeled by complex neural networks, including multi-layer perceptron~\citep{ren2020beta}, MLP-mixer~\citep{amayuelas2022neural}, or even transformers~\citep{bai2022query2particles}. However, their performances on one-hop queries are shown to be worse than the state-of-the-art but simple knowledge graph representation~\citep{trouillon2016complex}. Other works compiled the queries into the graphs and then solve queries with graph neural networks~\citep{daza2020message,liu2022mask}. In contrast to this work, existing investigations only focused on EPFO queries and require to train the entire GNN from zero.
% \noindent\textbf{Optimization-based complex query answering.}
Notably, knowledge graph representations also provide effective signals when answering complex queries. Specifically, CQD~\citep{arakelyan2021complex} uses the KG representation to calculate the continuous truth value of an EPFO logical formula with the logical $t$-norms. Then, the embeddings are optimized to maximize the continuous truth value. The optimization can be applied in the embedding space as well as the symbolic space. Our experiments show that this method performs badly on complex queries with logical negation, see Section~\ref{sec:cqd-limitation}.

\vskip-2cm
\section{Preliminaries}~\label{sec:prelim}
\vskip-.5cm

In this section, we formally introduce the knowledge graph and related model-theoretic concepts. These concepts are helpful when we define the DNF formulation of EFO-1 queries in Section~\ref{sec:DNF-formulation}.
Then, we introduce the abstract interface of knowledge graph representation, which is useful in defining one-hop inference in Section~\ref{sec:one-hop-inference}.

\noindent\textbf{Model-theoretic concepts for knowledge graphs.}
A first-order language $\mathcal{L}$ is specified by a triple $(\mathcal{F},  \relation, \mathcal{C})$ where $\mathcal{F}$, $\mathcal{R}$, and $\mathcal{C}$ are sets of symbols for functions, relations, and constants, respectively. 
A knowledge graph is specified under the language $\mathcal{L}_{\kg}$, where function symbol set $\mathcal{F} = \emptyset$ and relation symbols in $\relation$ denote binary relations.
A knowledge graph $\kg$ is an $\mathcal{L}_{\kg}$-structure given the entity set $\entity$, where  each constant $c\in \mathcal{C} = \entity $ is also an entity and each relation $r\in \relation$ is a set $r\subseteq \entity \times \entity$. We say $r(t_1, t_2) = \textrm{True}$ when $(t_1, t_2)\in r$. A knowledge graph is usually defined by the relation triple set $\edge = \{(h, r, t)\}$, where $h$ and $t$ are entities such that $(h, t)\in r$.
The Open World Assumption (OWA) means only a subset of $\edge$ can be observed. The observed knowledge graph is denoted by $\kg_{\rm obs}$.
A \emph{term} is either a constant or a variable. And an \emph{atomic formula} is either $r(t_1, t_2)$ or $\lnot r(t_1, t_2)$ where $t_1$ and $t_2$ are terms and $r$ is a relation. In the following parts, we use $a_{\cdot}$ to denote an atomic formula.
Then the first order formula can be inductively defined by adding \emph{connectives} (conjunction $\land$, disjunction $\lor$, and negation $\lnot$) to atomic formulas and \emph{quantifiers} (existential $\exists$ and universal $\forall$) to variables. 
The formal definition of first-order formulas can be found in~\citet{marker2006model}. A variable is bounded when associated with a quantifier, otherwise, it is free.

\subsection{knowledge graph representations}
Our approach relies upon the following abstract interface of knowledge graphs. Given the head entity embedding $\vh$, relation embedding $\vr$, and tail entity embedding $\vt$, a knowledge graph representation is able to produce a continuous truth value $\psi(\vh, \vr, \vt)$ in $[0, 1]$ of the embedding triple $(\vh, \vr, \vt)$.
In the symbolic space, whether $(h, t)\in r$ is indicated by the \{0, 1\} truth value of $r(h, t)$.
In the embedding space, $\psi(\vh, \vr, \vt)$ indicates the ``probability'' that $(h, t) \in r$. Hence, this definition is a continuous relaxation of the \{0, 1\} truth value.

Each knowledge graph representation has a scoring function $\phi(h, r, t)$, which can be based on a similarity function or a distance function. It is easy to convert such functions into $\psi$ by applying the Sigmoid function with necessary scaling and shift. For example, the scoring function of ComplEx~\citep{trouillon2016complex} embedding is.
\begin{align}
    \phi(\vh, \vr, \vt) = {\rm Re}(\langle\vh\otimes \vr, \bar \vt\rangle),
\end{align}
where $\otimes$ denotes the element-wise complex number multiplication and $\langle x, y \rangle$ is the complex inner product. ${\rm Re}$ extracts the real part of a complex number.
Then, the truth value of which can be computed by
\begin{align}
    \psi(\vh, \vr, \vt) = \sigma(\phi(\vh, \vr, \vt)),~\label{eq:complex-tv}
\end{align}
where $\sigma$ is the sigmoid function. This truth value function is used in~\citet{arakelyan2021complex} with logic $t$-norms.
In the context of knowledge graph representation learning, the entity embeddings $\vh, \vt$ are usually related to specific entity symbols in a look-up table. In this work, we assume the embedding vector is related to not only specific entities but also variables.

\begin{figure}[t]
\vskip-1cm
    \centering
    \includegraphics[width=.65\linewidth]{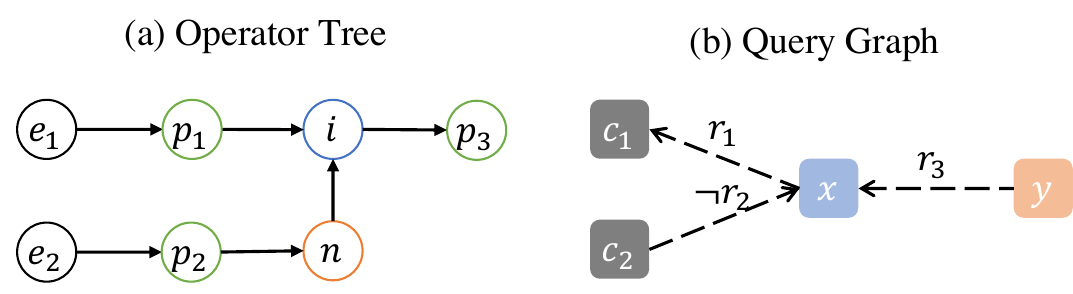}
    \caption{The operator tree representation (a) and query graph representation (b) of an examplar complex query in~\citet{ren2020beta}. The logical formula of this query is given by $r_1(x, c_1)\land \lnot  r_2 (c_2, x) \land r_3(y, x)$. For shorthand, this query is denoted as INP. The symbols about relations and terms are consistent in the query graph representation. In the operator tree representation, $c_1$ and $c_2$ are represented by anchor node operators $e_1$ and $e_2$. Relation $r_1$ and $r_3$ are represented by projection node $p_1$ and $p_3$. Relation $r_2$ is jointly represented by projection node $p_2$ and negation node $p_n$. The fact that $x$ is connected to all other nodes is represented by intersection operator $i$.}
    \label{fig:query-graph-example}
        \vskip-.5cm
\end{figure}

\section{EFO-1 queries and Query graphs}~\label{sec:DNF-formulation}
\vskip-.5cm

Without loss of generality, we consider the logical formulas under the disjunctive normal form.
Then, we define the Existential First Order queries with a single free variable (EFO-1).
\begin{definition}~\label{def:EFO-1}
Given a knowledge graph $\kg$, an EFO-1 query $Q$ is formulated as the first-order formula in the following disjunctive normal form,
\begin{align}
    Q(y, x_1, ..., x_m) = \exists x_1, ..., \exists x_m \left[ a_{11}\land a_{12}\land \cdots\land a_{1n_1} \right] \lor \cdots \lor \left[ a_{p1}\land a_{p2}\land \cdots\land a_{pn_p} \right],\label{eq:query-def}
\end{align}
where $y$ is the only free variable and $x_i, 1\leq i \leq m$ are $m$ existential variables.
$a_{ij}, 1\leq i \leq p, 1\leq j \leq n_p$ are atomic formulas with constants and variables $y, x_1, \dots, x_m$. $a_{ij}$ can be either negated or not.
\end{definition}

To answer the EFO-1 queries, one is expected to identify the answer set $A[Q, \kg]$. $A[Q,\kg]$ is the set of entities such that $a\in A[q, \kg]$ if and only if $Q(y=a, x_1, ..., x_m) = \textrm{True}$.

Moreover, since $Q$ is given in the disjunctive normal form, let us consider \begin{align}
    Q(y, x_1, ..., x_m) = CQ_1(y, x_1, ..., x_m) \lor \cdots \lor CQ_p(y, x_1, ..., x_m),~\label{eq:dnf-cqs}
\end{align}
where $CQ_i = \exists x_1,...\exists x_m a_{i1}\land a_{i2}\land \cdots\land a_{in_i} $ is a conjunctive query. It is easy to see that $A[Q, \kg] = \cup_{i=1}^{p} A[CQ_i, \kg]$. Therefore, solving $A[Q, \kg]$ is equivalent to solving the answer sets for all conjunctive queries.

\subsection{Query graph for conjunctive queries}
% Solving the original EFO-1 query boils down to solving the contained conjunctive queries. 

For each conjunctive query, the constant entities and variables are closely related by the atomic formulas. To emphasize the dependencies between entities and variables, we propose to use the query graph where the terms are nodes connected by the atomic formulas. Each node in the query graph is either a constant symbol or a free or existential variable. Each edge in the query graph represents an atomic formula containing both relation and negation information.

Figure~\ref{fig:query-graph-example} shows our query graph representation and the operator tree representation~\citep{wang2021benchmarking} for a typical query type defined in~\citet{ren2020beta}. We see that the query graph is more concise than the operator tree. First, we can see that the nodes and edges have different meanings in operator trees and query graphs. In the operator trees representation, each node is an operator denoting a set operation, whose output can be fed into other set operators. When using the complex query answering models with the operator tree, the information flows from leaf to root, which is unidirectional. However, for the query graph, the messages are passed bi-directionally through each edge as we will show in Section~\ref{sec:lmpnn}. In Figure~\ref{fig:query-graph-example}, the central node $x$ receives messages from all neighbor nodes.

\subsection{Expressiveness of Definition~\ref{def:EFO-1}}

Our definition is theoretically broader than all existing discussions.
The definition in \citep{wang2021benchmarking}, though widely adapted and discussed in the existing literature, has implicit assumptions because they are proposed to predict the answers by neural operators.
It is assumed that (1) the Skolemization process can always convert the query into a tree of set operators, and (2) all leaves of the operator tree are entities rather than variables.
A counterexample that can be expressed by Definition~\ref{def:EFO-1} but cannot be represented by operator trees is shown in Appendix~\ref{app:counterexample-ops-tree}.

\subsection{Limitation of optimization-based methods for negated queries}~\label{sec:cqd-limitation}
\vskip-.5cm
Our definition accepts the atomic formulas with negation operation. Therefore, It can be seen as a natural extension of the definitions in CQD~\citep{arakelyan2021complex}. \update{Moreover, we extended CQD to negation queries with the continuous truth value with fuzzy logical negator (see Appendix~\ref{app:extend-CQD}). The extended method is named CQD(E), and the results are compared in Table~\ref{tb:major}. We could see that the performance of CQD(E) is much less effective on negation queries.}
% However, the method proposed in~\citep{arakelyan2021complex} can not be effectively applied to queries with negation. Table~\ref{tb:major} shows that CQD performs much worse than existing complex query answering models on the negation queries. spos
% However, it is argued that the representation trained on the one-hop data suffers from issues like cascading error when applied to multi-hop queries~\cite{liu2021neural,chen2022fuzzy}.
We conjecture that the landscape of the objective function, i.e., the continuous truth values of the complex formula with negation, can be non-convex. So the optimization problem is inherently harder.
The non-convexity objective function is discussed in Appendix~\ref{app:non-convex}.

\section{One-hop inference on atomic formulas}~\label{sec:one-hop-inference}

As shown in Figure~\ref{fig:query-graph-example}, each edge in a query graph is an atomic formula containing the information of neighboring entities, relation, and logical negation, which are all crucial for predicting the answers.
We propose to encode such entity, relation, and logical negation information by one-hop inference that maximizes the continuous truth value of the (negated) atomic formula.
Let $\rho$ be the logical message encoding function of four input parameters, including neighboring entity embedding, relation embedding, direction information ($h2t$ or $t2h$), and logical negation information ($0$ for no negation and $1$ for with negation).
The goal of this section is to properly define $\rho$.

Moreover, inference on one-hop atomic formula is much easier compared to that on the entire complex EFO-1 query graph, as discussed in Section~\ref{sec:cqd-limitation}. We also provide the closed-form expression of $\rho$ for the knowledge graph embedding we used in this paper.

\subsection{One-hop inference in non-negated atomic formulas}
The first situation is to infer the head embedding $\hat \vh$ given the tail embedding $\vt$ and relation embedding $\vr$ on a non-negated atomic formula. We formulate the inference task in the form of \textit{continuous truth value maximization}:
\begin{align}
    \hat \vh = \rho(\vt,\vr,t2h,0) := \argmax_{\vx\in \domain} \psi(\vx, \vr, \vt),\label{eq:est_pos_head}
\end{align}
where $\domain$ is the search domain for the embedding.
Similarly, the tail embedding $\hat \vt$ can be infered given head embedding $\vh$ and relation embedding $\vr$, that is,
\begin{align}
    \hat \vt = \rho(\vh,\vr,h2t,0) := \argmax_{\vx\in \domain} \psi(\vh, \vr, \vx).\label{eq:est_pos_tail}
\end{align}

\subsection{One-hop inference in negated atomic formulas}
To extend the definition for non-negated atomic formulas to negated formulas, one only need to compute the continuous truth value of a negated atomic formula by the fuzzy logic negator~\citep{hajek2013metamathematics}, that is, $\psi(\vh, \lnot\vr, \vt) = 1 - \psi(\vh, \vr, \vt)$.
Then the estimation of head and tail embeddings is related to the following inference problems
\begin{align}
    \hat \vh = \rho(\vt,\vr,t2h,1) &:= \argmax_{\vx\in \domain} \psi(\vx, \lnot \vr, \vt) = \argmax_{\vx\in \domain} \left[1 - \psi(\vx, \vr, \vt)\right],\label{eq:est_neg_head}\\
    \hat \vt = \rho(\vh,\vr,h2t,1) &:= \argmax_{\vx\in \domain} \psi(\vh, \lnot \vr, \vx) = \argmax_{\vx\in \domain} \left[1 - \psi(\vh, \vr, \vx)\right].\label{eq:est_neg_tail}
\end{align}
This optimization-based approach is similar to CQD discussed in Section~\ref{sec:cqd-limitation}, but it is more reliable since atomic formulas are what we used to train the knowledge graph representation.
Specifically, the objectives in Eq.~(\ref{eq:est_pos_head}) and Eq.~(\ref{eq:est_pos_tail}) are eventually the likelihood of positive samples and those in Eq.~(\ref{eq:est_neg_head}) and Eq.~(\ref{eq:est_neg_tail}) are the likelihood of negative samples. These objectives are widely used to learn the representations with negative sampling.

\noindent\textbf{Closed-form message encoding function $\rho$ with pretrained KG representation.}
We have already defined $\rho$ with optimization problems.
Moreover, the closed-form expression of $\rho$ can be (approximately) derived in many cases given two facts about the knowledge graph representations: (1) the scoring function of knowledge graph representation is usually as simple as the inner product or distance. \update{More details about constructing closed-form $\rho$ for these two types of scoring functions are discussed in Appendix~\ref{app:kgr-logical-message-constructions}}; (2) the sigmoid function outside the scoring function $\phi$ makes the final truth value zero or one only if the output of the scoring function is sufficiently small or large.
\update{We identify the closed-form approximation of $\rho$ for ComplEx~\citep{trouillon2016complex} and other five different KG representations in Appendix~\ref{app:complex} and~\ref{app:more-KG}, which allows fast computation logical messages used in Section~\ref{sec:lmpnn}.}

\section{Logical Message Passing Neural Networks}~\label{sec:lmpnn}
\vskip-.5cm

In this section, we propose a Logical Message Passing Neural Network (LMPNN) to bridge the one-hop inference proposed in Section~\ref{sec:one-hop-inference} and complex query answering defined in Section~\ref{sec:DNF-formulation}. As a variation of the message-passing neural network~\citep{gilmer2017neural,xu2018powerful}, LMPNN has two stages: (1) each node passes a message to all its neighbors; (2) each node aggregates the received messages and updates its latent embedding. Figure~\ref{fig:lmpnn} illustrates how those two stages work. Then the final layer embedding for the free variable node can be used to predict the answer entities.

% \vskip-2cm
\subsection{Logical message passing over the query graph}~\label{sec:message-passing}

We use the message encoding function $\rho$ to compute the messages passed from node to node.
Figure~\ref{fig:lmpnn} (a) demonstrates the logical message passing with blue arrows. Each node receives the message from all its neighboring nodes.
% In this work, we employ the ComplEx~\citep{trouillon2016complex} embeddings in the complex vector space. 

\begin{figure}[t]
\vskip-1cm
    \centering
    \includegraphics[width=.85\textwidth]{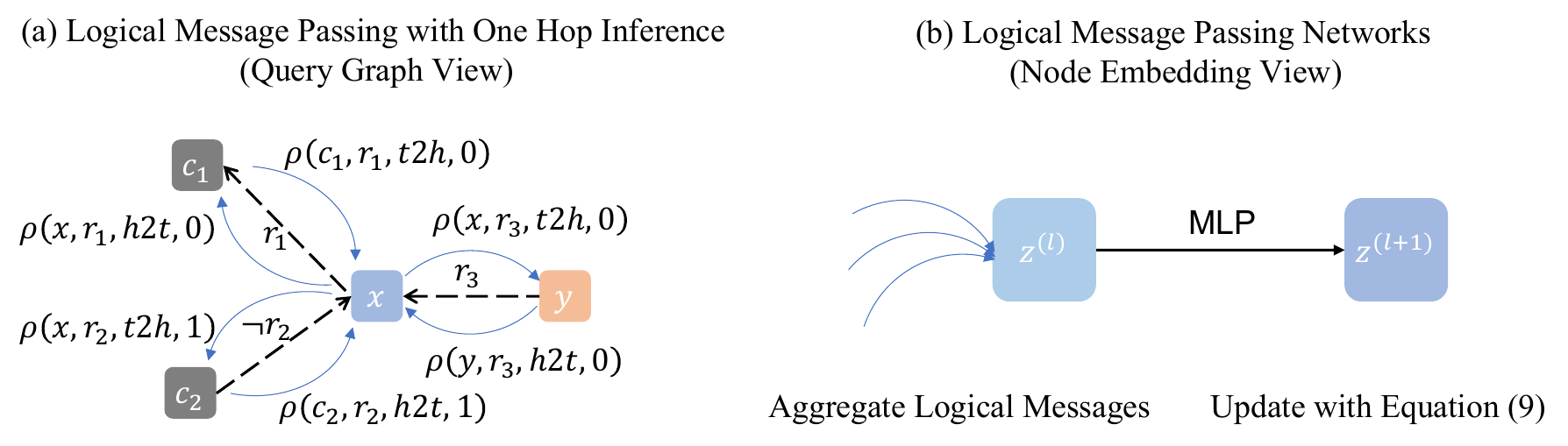}
    \caption{An illustration of the two-stage procedures of logical message passing neural networks: (a) passing the logical messages across the graph; (b) updating the node embedding with the aggregated information with an MLP network.}
    \label{fig:lmpnn}
    \vskip-.2cm
\end{figure}

\vskip-3cm
\subsection{Node embeddings in query graph and updating scheme}~\label{sec:embedding-update}

Let $n$ be a node and $\vz_n^{(l)}$ be the embedding of $n$ at the $l$-th layer. We discuss how to compute the $\vz_n^{(l)}$ from the input layer $l=0$ to latent layers $l>0$.
When $l=0$, $\vz_n^{(0)}$ falls into one of three situations. (1) For an entity node $e$, $\vz_e^{(0)}$ is looked up from the pretrained knowledge graph representation. (2) For an existential variable node $x_i$, we assign an learnable embedding $\vz_{x_i}^{(0)} = \vv_x$. (3) For a free variable node $y$, we assign another learnable embedding $\vz_y^{(0)} = \vv_y$. We set that all existential variables $x_i$ share one $\vv_x$, for simplicity. 

At the $l$-th layer, $\vz_n^{(l)}$ can be computed by updating the aggregated information from the $(l-1)$-th layer. Specifically, let $\mathcal{N}(n)$ be the neighbor set of node $n$ in the query graph. For each neighbor node $v \in \mathcal{N}(n)$, one can obtain its embedding $\vz_v^{(l-1)}\in \domain$, the relation $r_{v \to n}\in \relation$, the direction ${\rm D}_{v \to n} \in \{h2t, t2h\}$, and the negation indicator ${\rm Neg}_{v \to n}\in \{0, 1\}$. Then, the embedding $\vz_n^{(l)}, l \geq 1$ is computed by an MLP network after the summation of the aggregated information, that is,
\begin{align}
    \vz_n^{(l)} = {\rm MLP}^{(l)} \left[ \epsilon \vz_n^{(l-1)} + \sum_{v\in \mathcal{N}(n)} \rho\left(\vz_v^{(l-1)}, r_{v\to n}, {\rm D}_{v \to n}, {\rm Neg}_{v\to n}\right) \right],~\label{eq:lgnn-layer}
\end{align}
where $\epsilon$ is a hyperparameter. To feed the complex vector of ComplEx~\citep{trouillon2016complex} into the MLP network, the real and imaginary vectors of one complex embedding are concatenated and regarded as one feature vector.
The formulation in Eq.~(\ref{eq:lgnn-layer}) is similar to the Graph Isomorphic Networks~\citep{xu2018powerful} except that the logical messages passed are encoded by $\rho$ from the pretrained KG representation.
Trainable $\vv_x$ and $\vv_y$ are unrelated to any specific entity.
%But they are shown to be empirically important for estimating the final answers.

\subsection{Learning LMPNN for complex query answering}
To train the neural network, we apply the Noisy Contrastive Estimation (NCE) loss for ranking tasks proposed in~\citep{ma2018noise}. Let $\{(a_i, q_i)\}_{i=1}^n$ be the positive data samples, where $a_i \in A[q_i, \kg]$. Our optimization involves $K$ uniformly sampled noisy answers from the entity set. The NCE objective is:
\begin{align}
    L_{NCE} = - \frac{1}{n} \sum_{i=1}^n \log\left[ \frac{\exp\left[\cos(\va_i, \vz(q_i))/T\right]}{\exp\left[\cos(\va_i, \vz(q_i))/T\right] + \sum_{k=1}^K \exp\left[\cos(\vz_k, \vz(q_i))/T\right]}\right],~\label{eq:loss}
\end{align}
where $\va_i$ is the embedding of positive answer $a_i$ and $\vz_k$ is the embedding of the noisy entity samples. $z(q_i)$ indicates the embedding of the free variable in $q_i$ at the final layer of LMPNN. $T$ is a hyperparameter. This objective is optimized by stochastic gradient descent.

\vskip-.5cm
\subsection{Answering complex queries with LMPNN}
We discuss two ways to retrieve answers for general DNF queries in Definition~\ref{def:EFO-1}: (a) A two-step approach as the previous works~\citep{ren2020query2box,ren2020beta}, where the free variable embedding for each sub conjunctive query are estimated, the answer entities are then ranked by the minimal distance (or maximal similarity) against free variable embeddings from multiple sub conjunctive queries. (2) We transform all disjunctions in the formula to conjunctions, and then one query graph is sufficient for solving the transformed query. The answer set of a transformed query is a strict subset of the original answer. For simplicity, we use the second way to solve disjunctive queries in this paper, though it may lead to sub-optimal performance.  Then, we discuss how to answer conjunctive queries with LMPNN.

\noindent\textbf{Conjunctive query graph of arbitrary size.}
We apply LMPNN to the query graph of a given conjunctive query $Q$.
A sufficient condition to produce a correct answer is that the free variable node has received messages from all the entity nodes after the forward passing through LMPNN layers.
Let the largest distance between entity nodes and the free variable node be $L$. Then, we apply the LMPNN layers $L$ times to ensure all messages from entity nodes are successfully received by the free variable node. 
The prediction of answer embedding $\vz(Q)$ is given by the free variable embedding at the final layer, i.e., $\vz(Q) = \vz_y^{(L)}$.
We propose to use the cosine similarity between $\vz(Q)$ and the pretrained entity embeddings to rank the entities and then retrieve answers.

Since $L$ is not determined, we assume all $L$ layers share the \textbf{same} MLP layer. Hence, the only trainable parameter in LMPNN is one MLP network and two embeddings for existential and free variables.
Our experiments on different query types show that the single MLP network has strong generalizability to LMPNN of different depths.

\begin{table}[t]
\centering
\vskip-1cm
\caption{MRR results of different CQA models on three KGs. $A_{\rm P}$ represents the average score of EPFO queries and $A_{\rm N}$ represents the average score of queries with negation. The boldface indicates the best results for each KG.}~\label{tb:major}
\tiny
\begin{tabular}{p{.5cm}lp{.3cm}p{.3cm}p{.3cm}p{.3cm}p{.3cm}p{.3cm}p{.3cm}p{.3cm}p{.3cm}p{.3cm}p{.3cm}p{.3cm}p{.3cm}p{.3cm}p{.3cm}p{.3cm}}
\toprule
\multicolumn{1}{l}{KG} & Model         & 1P            & 2P            & 3P            & 2I            & 3I            & PI            & IP            & 2U            & UP            & 2IN           & 3IN           & INP           & PIN          & PNI           & $A_{\rm P}$      & $A_{\rm N}$       \\ \midrule
\multirow{6}{*}{FB15K}      & BetaE & 65.1          & 25.7          & 24.7          & 55.8          & 66.5          & 43.9          & 28.1          & 40.1          & 25.2          & 14.3          & 14.7          & 11.5          & 6.5           & 12.4          & 41.6          & 11.8          \\
                            & ConE  & 73.3          & 33.8          & \textbf{29.2} & 64.4          & 73.7          & \textbf{50.9} & 35.7          & \textbf{55.7} & \textbf{31.4} & 17.9          & 18.7          & 12.5          & 9.8           & 15.1          & 49.8          & 14.8          \\
                            & Q2P   & 82.6          & 30.8          & 25.5          & 65.1          & 74.7          & 49.5          & 34.9          & 32.1          & 26.2          & 21.9          & 20.8          & 12.5          & 8.9           & 17.1          & 46.8          & 16.4          \\\cmidrule{2-18}
                            & \multicolumn{17}{l}{(Using pretrained KG representation)}\\
                            & CQD\update{(E)}   & \textbf{89.4} & 27.6          & 15.1          & 63.0          & 65.5          & 46.0          & 35.2          & 42.9          & 23.2          & 0.2           & 0.2           & 4.0           & 0.1           & \textbf{18.4} & 45.3          & 4.6           \\
                            & LMPNN & 85.0          & \textbf{39.3} & 28.6          & \textbf{68.2} & \textbf{76.5} & 46.7          & \textbf{43.0} & 36.7          & \textbf{31.4} & \textbf{29.1} & \textbf{29.4} & \textbf{14.9} & \textbf{10.2} & 16.4          & \textbf{50.6} & \textbf{20.0} \\ \midrule
\multirow{6}{*}{\parbox{.5cm}{FB15K\\-237}}  & BetaE         & 39.0          & 10.9          & 10.0          & 28.8          & 42.5          & 22.4          & 12.6          & 12.4          & 9.7           & 5.1           & 7.9           & 7.4           & 3.6          & 3.4           & 20.9          & 5.4           \\
                            & ConE          & 41.8          & 12.8          & \textbf{11.0} & 32.6          & 47.3          & \textbf{25.5} & 14.0          & \textbf{14.5} & \textbf{10.8} & 5.4           & 8.6           & \textbf{7.8}  & 4.0          & 3.6           & 23.4          & 5.9           \\
                            & Q2P           & 39.1          & 11.4          & 10.1          & 32.3          & 47.7          & 24.0          & 14.3          & 8.7           & 9.1           & 4.4           & 9.7           & 7.5           & \textbf{4.6} & 3.8           & 21.9          & 6.0           \\\cmidrule{2-18}
                            & \multicolumn{17}{l}{(Using pretrained KG representation)}\\
                            & CQD\update{(E)}           & \textbf{46.7} & 10.3          & 6.5           & 23.1          & 29.8          & 22.1          & 16.3          & 14.2          & 8.9           & 0.2           & 0.2           & 2.1           & 0.1          & \textbf{6.1}  & 19.8          & 1.7           \\
                            & LMPNN & 45.9 & \textbf{13.1} & 10.3 & \textbf{34.8} & \textbf{48.9} & 22.7 & \textbf{17.6} & 13.5 & 10.3 & \textbf{8.7} & \textbf{12.9} & 7.7 & \textbf{4.6} & 5.2 & \textbf{24.1}     & \textbf{7.8} \\  \midrule
\multirow{6}{*}{NELL}       & BetaE         & 53.0          & 13.0          & 11.4          & 37.6          & 47.5          & 24.1          & 14.3          & 12.2          & 8.5           & 5.1           & 7.8           & 10.0          & 3.1          & 3.5           & 24.6          & 5.9           \\
                            & ConE          & 53.1          & 16.1          & 13.9          & 40.0          & \textbf{50.8} & 26.3          & 17.5          & 15.3          & 11.3          & 5.7           & 8.1           & 10.8          & 3.5          & 3.9           & 27.2          & 6.4           \\
                            & Q2P           & 56.5          & 15.2          & 12.5          & 35.8          & 48.7          & 22.6          & 16.1          & 11.1          & 10.4          & 5.1           & 7.4           & 10.2          & 3.3          & 3.4           & 25.5          & 6.0           \\\cmidrule{2-18}
                            & \multicolumn{17}{l}{(Using pretrained KG representation)}\\
                            & CQD\update{(E)}           & \textbf{60.8} & 18.3          & 13.2          & 36.5          & 43.0          & \textbf{30.0} & 22.5          & \textbf{17.6} & 13.7          & 0.1           & 0.1           & 4.0           & 0.0          & \textbf{5.2}  & 28.4          & 1.9           \\
                            & LMPNN         & 60.6          & \textbf{22.1} & \textbf{17.5} & \textbf{40.1} & 50.3          & 28.4          & \textbf{24.9} & 17.2          & \textbf{15.7} & \textbf{8.5}  & \textbf{10.8} & \textbf{12.2} & \textbf{3.9} & 4.8           & \textbf{30.7} & \textbf{8.0}  \\ \bottomrule
\end{tabular}
\end{table}

\vskip-1cm
\section{Experiments}~\label{sec:exp}
\vskip-.5cm
In this section, we compare LMPNN with existing neural CQA methods and justify the important features of LMPNN with ablation studies. Our results show that LMPNN is a very strong method for answering complex queries.

\subsection{Experimental Settings}

\textbf{Baselines.}
We consider the neural complex query answering models for EFO-1 queries in recent three years, including BetaE~\citep{ren2020beta}, ConE~\citep{zhang2021cone}, and Q2P~\citep{bai2022query2particles}. The baseline results are obtained by training models with the code released by the authors under the suggested hyperparameters. Neural-symbolic ensemble models are implemented with the negation. Moreover, we also implement and report CQD~\citep{arakelyan2021complex} with the same pretrained knowledge graph representation. We also compare more neural CQA models in Appendix~\ref{app:benchmark}.

\noindent\textbf{Datasets.}
We consider the widely used training and evaluation dataset in~\citep{ren2020beta}. It allows us to compare our results with existing methods directly. We compare the results on FB15k~\citep{bordes2013translating}, FB15k-237~\citep{toutanova2015representing}, and NELL~\citep{carlson2010toward}.

\noindent\textbf{Evaluations.}
The evaluation metric follows the previous works~\citep{ren2020beta}. For each query instance, we first rank all entities except those observed as easy answers based on their cosine similarity with the free variable embedding estimated by LMPNN. The rankings of hard answers are used to compute MRR for the given query instance. Then, we average the metrics from all query instances. In this paper, MRR is reported and compared.

\noindent\textbf{LMPNN Setting.}
We use the ComplEx~\citep{trouillon2016complex} checkpoints released by~\citet{arakelyan2021complex} in LMPNN to make a fair comparison to CQD~\citep{arakelyan2021complex}. \update{More results about LMPNN with other six kinds of pretrained KG representations are also presented in the Appendix~\ref{app:more-KG}}. The rank of ComplEx is 1,000, and the epoch for the checkpoint is 100.
For LMPNN, we use AdamW to train the MLP network. The learning rate is 1e-4, and the weight decay is 1e-4. The batch size is 1,024, and the negative sample size is 128, selected from \{32, 128, 512\}. The MLP network has one hidden layer whose dimension is 8,192 for NELL and FB15k, and 4,096 for FB15k-237. $T$ in the training objective is chosen as $0.05$ for FB15k-237 and FB15k and $0.1$ for NELL. $\epsilon$ in Eq~(\ref{eq:lgnn-layer}) is chosen to be $0.1$. Reported results are averaged from 3 random experimental trials. All experiments of LMPNN are conducted on a single V100 GPU (16GB).

\subsection{Major Results}

Table~\ref{tb:major} presents the MRR results of LMPNN and neural CQA baselines on answering EFO-1 queries over three KGs. It is found that LMPNN reaches the best performance on average for both EPFO and negation queries. Our results on negation queries indicate that the embedding estimation formulation with negated atomic formula proposed in Section~\ref{sec:message-passing} produces meaningful features. 

Interestingly, LMPNN performs much better than the CQD on both EPFO and negation queries with the same pretrained knowledge graph representation. Our results show that the LMPNN is stronger than CQD in more complex queries, especially those with logical negation.
Notably, our approach does not require any optimization in the inference time as in~\citet{arakelyan2021complex}.
It confirms again that LMPNN successfully leverages the representation power of knowledge graph representation simply by training an MLP.

\subsection{Ablation Study}~\label{sec:ablation-study}

In the ablation study, we conduct extensive experiments to justify the effects of four key factors of LMPNN, including (1) the logical message passing by one-hop inference; (2) the hyperparameter $\epsilon$ at each LMPNN layer; (3) the depth of LMPNN; (4) the hyperparameter $T$ at the noisy contrastive learning loss. All experiments of the ablation study are conducted on queries at FB15K-237. 

To justify the effect of one-hop inference, we compare a baseline with logical messages computed by a linear transformation of the concatenation of the entity embedding, the relation embedding, a binary indicator for $h2t$ and $t2h$, and a binary indicator for negation. For example, for ComplEx embedding in 1,000-dimensional complex vector space, there are 2,000 parameters for entity embedding and 2,000 for relation embedding. The concatenation produces a feature of 4,002 dimensions. Then we use a linear transformation to transform this feature to 2000 dimensions so that the logical message can be used in Eq.~(\ref{eq:lgnn-layer}). This baseline is denoted as \textsc{KGE Cat}.
To justify the effect of the depth of LMPNN, we alter the depth of LMPNN based on its original depth $L$ into $L-1$, $L+1$, $L+2$, and $L+3$. The value $L$ is computed from the maximal distances between the free variable node and constant entity nodes in the query graph. Even in the $L-1$ case, we keep the depth of LMPNN at least 1 to ensure the logical message is passed between nodes.

Table~\ref{tb:ablation} shows the results of the ablation study, where the setting reported in Table~\ref{tb:major} is indicated by \textsc{Best Choice}.
We note that \textsc{Best Choice} uses one-hop inference on atomic formulas, $L$ LMPNN layers, $\eps=0.1$, and $T=0.05$ for FB15k-237.
We find that \textsc{KGE Cat} performs poorly even though it contains the pretrained KG information, which indicates that one-hop inference is essential to answer complex queries. Meanwhile, $L-1$ performs worse than \textsc{Best Choice} since the information is not fully passed to the free variable node. And the worse performances of $L+1$, $L+2$, and $L+3$ cases indicate that our definition for $L$ is reasonable. Moreover, $\epsilon$ and $T$ are also important to the best performance. Overall, one-hop inference on atomic formula is the most critical factor in the learning and inference process of LMPNN.

\begin{table}[t]
\vskip-1.5cm
\centering
\caption{MRR results of different hyperparameter settings compared to the best combination.}~\label{tb:ablation}
\tiny
\begin{tabular}{p{1.3cm}rrrrrrrrrrrrrrrr}
\toprule
Model                                                         & 1P            & 2P            & 3P            & 2I            & 3I            & PI            & IP            & 2U            & UP            & 2IN           & 3IN           & INP           & PIN          & PNI           & $A_{\rm P}$      & $A_{\rm N}$ \\\midrule
\textsc{KGE Cat} & 30.5 & 6.8  & 6.9  & 7.8  & 7.8  & 6.2  & 6.4  & 8.4  & 6.3  & 3.0 & 2.8  & 5.6 & 2.3 & 1.7 & 9.7      & 3.1     \\\midrule
$\epsilon=0$                                                  & 45.5 & 12.6 & 9.7  & 33.6 & 47.1 & 11.1 & 17.1 & 14.0 & 10.0 & 8.8 & 11.6 & 7.3 & 3.5 & 2.7 & 22.3     & 6.8     \\
$\epsilon=0.5$                                                & 43.9 & 11.9 & 9.7  & 30.5 & 42.3 & 18.2 & 14.8 & 13.8 & 9.6  & 7.3 & 10.3 & 7.0 & 3.8 & 5.4 & 21.6     & 6.7     \\\midrule
$L-1$                                               & 45.5 & 6.8  & 7.8  & 34.1 & 47.5 & 11.6 & 6.3  & 13.9 & 6.1  & 8.6 & 11.7 & 5.6 & 2.9 & 3.9 & 20.0     & 6.5     \\
$L+1$                                                  & 45.3 & 12.6 & 10.1 & 33.1 & 46.4 & 20.4 & 16.3 & 13.7 & 10.0 & 8.0 & 10.9 & 7.5 & 4.3 & 5.2 & 23.1     & 7.2     \\
$L+2$                                                  & 45.6 & 13.0 & 10.1 & 32.9 & 45.8 & 21.3 & 17.7 & 13.4 & 10.0 & 7.9 & 10.9 & 7.7 & 4.2 & 5.2 & 23.3     & 7.2     \\
$L+3$                                                & 45.4 & 13.0 & 10.1 & 32.4 & 45.2 & 22.1 & 17.4 & 13.3 & 10.0 & 7.7 & 10.9 & 7.7 & 4.2 & 5.2 & 23.2     & 7.1     \\\midrule
$T=0.01$                                                        & 43.9 & 12.1 & 9.7  & 31.9 & 46.5 & 18.3 & 15.2 & 13.5 & 10.2 & 6.9 & 11.2 & 7.2 & 4.6 & 4.3 & 22.4     & 6.8     \\
$T=0.1$                                                         & 45.6 & 12.7 & 9.9  & 33.7 & 47.1 & 22.0 & 17.1 & 14.0 & 10.0 & 8.8 & 11.5 & 7.3 & 4.4 & 5.4 & 23.6     & 7.5     \\\midrule
\textsc{Best Choice} & 45.9 & 13.1 & 10.3 & 34.8 & 48.9 & 22.7 & 17.6 & 13.5 & 10.3 & 8.7 & 12.9 & 7.7 & 4.6 & 5.2 & 24.1     & 7.8    \\\bottomrule
\end{tabular}
\vskip-.5cm
\end{table}

\section{Conclusion}
In this paper, we present LMPNN to answer complex queries, especially EFO-1 queries, over knowledge graphs. LMPNN achieves a strong performance by training one MLP network to aggregate the logical messages passed over the query graph. In the ablation study, we identify that the one-hop inference on atomic formulas based on a pretrained knowledge graph is critical to answering complex queries. Our research effectively bridges the gap between EFO-1 query answering tasks and the long-standing achievements of knowledge graph representation. In future work, our method can be combined with stronger knowledge graph representation techniques, as well as with neural-symbolic ensembles.

\section{Ackonwledgement}
The authors of this paper were supported by the NSFC Fund (U20B2053) from the NSFC of China, the RIF (R6020-19 and R6021-20) and the GRF (16211520 and 16205322) from RGC of Hong Kong, the MHKJFS (MHP/001/19) from ITC of Hong Kong and the National Key R\&D Program of China (2019YFE0198200) with special thanks to HKMAAC and CUSBLT, and the Jiangsu Province Science and Technology Collaboration Fund (BZ2021065). We also thank the support from NVIDIA AI Technology Center (NVAITC) and the UGC Research Matching Grants (RMGS20EG01-D, RMGS20CR11, RMGS20CR12, RMGS20EG19, RMGS20EG21, RMGS23CR05, RMGS23EG08).

\bibliographystyle{iclr2023_conference}
\bibliography{kg.bib}

\clearpage
\appendix

\section{A Counterexample for the Expresiveness of Operator Tree Representation}~\label{app:counterexample-ops-tree}
\begin{example}~\label{eg:not-ops-tree}
Given a citation network with authors, papers, and conferences, one query wants to find \textit{ICLR authors with at least one collaborator}.
It can be expressed in the format in Definition~\ref{def:EFO-1} as
\begin{align}
    q(a_1, a_2, p_1, p_2) = \exists a_2 \exists p_1 \exists p_2 & {\rm IsAuthor}(a_1, p_1) \land {\rm InConf}(p_1, {\rm ICLR}) \nonumber\\ &\land {\rm IsAuthor}(a_1, p_2)  \land {\rm IsAuthor}(a_2, p_2) \land \lnot(a_1 = a_2) \nonumber.
\end{align}
\end{example}
We see that if we take $a_1$ as the answer node. $a_2$ and $p_2$ are leaves but not anchor entities. 
\update{In this way, this query cannot be represented by the operator tree anchor nodes.
However, this query can be represented in the query graph, see Figure~\ref{fig:query-graph-not-ops-tree}.

\begin{figure}[h]
    \centering
    \includegraphics[width=.3\linewidth]{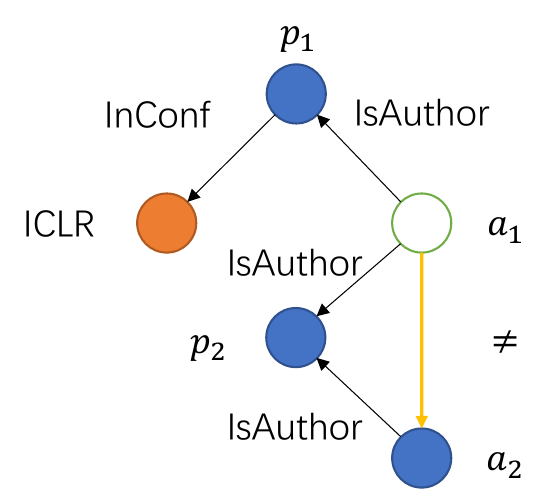}
    \caption{The query graph for the query in Example~\ref{eg:not-ops-tree}.}
    \label{fig:query-graph-not-ops-tree}
\end{figure}

Then we discuss how to answer this query with LMPNN. It is easy to see that LMPNN can be applied to the query graph in Figure~\ref{fig:query-graph-not-ops-tree} once the $\neq$ is considered as the combination of predicate $eq$ (equality) and negation $\lnot$.

To include $eq$, we only need to define the logical messages $\rho(\va_1, eq, 1)$ and $\rho(\va_2, eq, 1)$. According to the Proposition~\ref{prop:construction} in Appendix~\ref{app:kgr-logical-message-constructions}, these two problems boil down to defining $\rho(\va, eq, 0) = f(\va, eq) $. By the semantics of ``equality'', equal terms shares the equal embedding. Therefore, the entity embedding which is equal to a given embedding $\va$ is just $f(\va, eq) = \va$. Then, $\rho(\va, eq, 0) = f(\va, eq) = \va$ and $\rho(\va_1, eq, 1) = - \va_1$, $\rho(\va_2, eq, 1) = - \va_2$. In this way, LMPNN is the first actionable approach to address the queries in Example~\ref{eg:not-ops-tree}.
}

\section{A Natural Extension of Complex Query Decomposition (CQD) to Answer Negation Queries}~\label{app:extend-CQD}

\update{
In this paper, we compare the optimization-based approach CQD~\citep{arakelyan2021complex} by extending existing CQD with fuzzy logic negator. The extended version is denoted as CQD(E). For example, consider the logical formula INP query in the Figure~\ref{fig:query-graph-example}, we could estimate the continuous truth value of the given logical formula $r_1(x, c_1)\land \lnot  r_2 (c_2, x) \land r_3(y, x)$ as follows
\begin{align}
    TV_{\rm CQD(E)}(\vx, \vy | \textrm{INP}) = \psi_{r_1}(\vx, \vc_1) \top \left[1 - \psi_{r_2} (\vc_2, \vx)\right] \top \psi_{r_3}(\vy, \vx),
\end{align}
where $\phi_{\cdot}$ are the continuous value of relations $r_1$, $r_2$, and $r_3$ and $\top$ is a $t$-norm. Then CQD(E) maximizes the continuous truth value $TV_{\rm CQD(E)}(\vx, \vy | \textrm{INP})$ to obtain the ``best'' variable embeddings $\vx$ and $\vy$ as~\citet{arakelyan2021complex}.

% \subsection{The performance of extended CQD on negation queries}

% We investigate the effect of introducing the fuzzy logic negator to CQD(E) by comparing with a vanilla baseline CQD(O), where we just ignore the negation in the logical formula. For example, the objective of CQD(O) for the PIN query above is
% \begin{align}
%     TV_{\rm CQD(O)}(\vx, \vy | \textrm{PIN}) = \psi_{r_1}(\vx, \vc_1) \top \left[\psi_{r_2} (\vc_2, \vx)\right] \top \psi_{r_3}(\vy, \vx).
% \end{align}
% One could find that CQD(O)'s objective for PIN query could be problematic. Because $TV_{\rm CQD(O)}(\vx, \vy | \textrm{PIN})$ is also the continuous truth value of the logical formula $r_1(x, c_1)\land r_2 (c_2, x) \land r_3(y, x)$.

% Table~\ref{} shows the performance of CQD(E) and CQD(O). We find that the 
}

\subsection{Non-convex landscape of negated complex queries}~\label{app:non-convex}
In this part, we show that the negator in fuzzy logic introduces non-convexity.
Let $x$ be an optimizable variable in the 1D interval $I$ and $\phi_1(x)$ and $\phi_2(x)$ be two continuous truth value of two atomic formula $a_1$ and $a_2$, respectively. They are convex functions over $I$. Consider the conjunctive query $a_1\land \lnot a_2$.
The continuous truth value is
\begin{align}
    J(x) = \phi_1(x) \top [1 - \phi_2(x)].
\end{align}
Consider an example with convex $\phi_1(x)$ and $\phi_2(x)$.
Let $\top$ is the product $t$-norm, and $\phi_1(x) = 1 - x^2$ and $\phi_2(x) = 1- (x-0.3)^2$ for $x\in [0, 1]$. Then $J(x)$ turns to be non-convex as shown in Figure~\ref{fig:non-convex}.
\begin{figure}[b]
    \centering
    \includegraphics[width=0.5\linewidth]{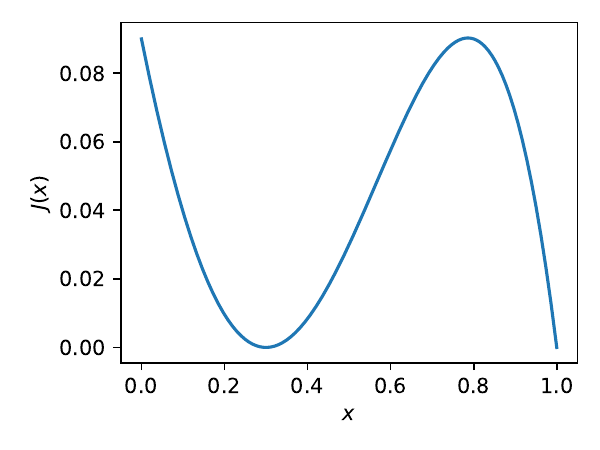}
    \caption{The landscape of continuous truth value becomes non-convex after negation.}
    \label{fig:non-convex}
\end{figure}

\section{Closed-form Logical Message by ComplEx}~\label{app:complex}
In this section, we derive the closed-form logical message encoding function for ComplEx embedding~\citep{trouillon2016complex}. The scoring function of ComplEx is
\begin{align}
    \phi(\vh, \vr, \vt) = {\rm Re}(\langle\vh \otimes \vr, \bar \vt\rangle).
\end{align}

We expand the complex embeddings to real vectors $\vh = \vh_r + i\vh_i$, $\vr = \vr_r + i\vr_i$, $\vt = \vt_r + i \vt_i$. Then the scoring function is
\begin{align}
    \phi(\vh, \vr, \vt) 
    & = {\rm Re}(\langle\vh \otimes \vr, \bar \vt\rangle)\\
    & = \langle \vr_r \otimes \vh_r - \vr_i \otimes \vh_i, \vt_r\rangle + \langle \vr_r \otimes \vh_i + \vr_i \otimes \vh_r, \vt_i\rangle\\
    & = \langle \vr_r \otimes \vt_r + \vr_i \otimes \vt_i, \vh_r\rangle + \langle \vr_r \otimes \vt_i - \vr_i \otimes \vt_r, \vh_i\rangle.
\end{align}
Since $-\vr_i = \bar \vr_i$ under the complex conjugate, then,
\begin{align}
    \phi(\vh, \vr, \vt) 
    & = \langle \vr_r \otimes \vt_r - \bar \vr_i \otimes \vt_i, \vh_r\rangle + \langle \vr_r \otimes \vt_i + \bar \vr_i \otimes \vt_r, \vh_i\rangle \\
    & = {\rm Re}(\langle\vt \otimes \bar \vr, \bar \vh\rangle).
\end{align}

Then, we optimize the continuous truth value of ComplEx given in Eq.~(\ref{eq:complex-tv}) to derive the closed-form estimation of Eq.~(\ref{eq:est_pos_head}). We note that the embedding used in ComplEx is not strictly restricted in a domain set $\domain$. Instead, the N3 regularization~\citep{lacroix2018canonical} is applied to the embedding as a \emph{soft} constraint. Therefore, in our derivation of the close form solution, we also employ N3 regularization rather than hard constraint.
% It is hard to identify the best embeddings for the maximization (minimization) problem from Eq.~\ref{eq:est_pos_head} to Eq.~\ref{eq:est_neg_tail}. In many cases, there are infinitely many solutions.
Our first result is the following proposition.
\begin{proposition}
For ComplEx embedding, the logical message encoding function has the following closed form with respect to the complex embedding $\vr$ and $\vt$,
\begin{align}
    \rho(\vt,\vr,t2h,0) = \frac{\bar \vr \otimes \vt}{\sqrt{3\lambda \|\vr \otimes \bar \vt\|}}.
\end{align}
\end{proposition}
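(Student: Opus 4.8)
The plan is to treat the defining optimization \eqref{eq:est_pos_head} as a regularized rather than hard-constrained problem, and to exploit the real-linearity that the ComplEx score acquires once it is expressed as a function of the head. First I would note that, since $\sigma$ is strictly increasing, $\argmax_{\vx}\psi(\vx,\vr,\vt)=\argmax_{\vx}\phi(\vx,\vr,\vt)$; the sole role of the sigmoid is to legitimize this substitution, which is exactly the ``approximate'' step flagged in the main text (the saturation of $\sigma$ means the objective is effectively the score itself). Following \citet{lacroix2018canonical}, I would replace the search domain $\domain$ by the N3 soft constraint, so that the quantity actually maximized over the complex vector $\vx$ is $g(\vx)=\phi(\vx,\vr,\vt)-\lambda\|\vx\|_3^3$, where only the $\vx$-dependent part of the N3 penalty survives because $\vr$ and $\vt$ are fixed.

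The key structural observation is the identity already derived above, $\phi(\vx,\vr,\vt)={\rm Re}(\langle\vt\otimes\bar\vr,\bar\vx\rangle)$, which is real-linear in $\vx$. Writing $\vw=\bar\vr\otimes\vt$ (so its $k$-th entry is $w_k=\bar r_k t_k$) gives $\phi(\vx,\vr,\vt)=\sum_k{\rm Re}(w_k\bar x_k)$, and since $\|\vx\|_3^3=\sum_k|x_k|^3$, the objective decouples completely across coordinates: $g(\vx)=\sum_k\bigl[{\rm Re}(w_k\bar x_k)-\lambda|x_k|^3\bigr]$. Each summand depends only on the single complex number $x_k$, so I would optimize one coordinate at a time.

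For a single coordinate I would first fix the modulus and optimize the phase: ${\rm Re}(w_k\bar x_k)=|w_k|\,|x_k|\cos(\arg w_k-\arg x_k)$ is maximized when $x_k$ is aligned with $w_k$, i.e. $x_k=|x_k|\,w_k/|w_k|$. Substituting reduces the problem to the scalar radial maximization of $|w_k|\,\tau-\lambda\tau^3$ over $\tau=|x_k|\ge 0$, whose stationary condition $|w_k|-3\lambda\tau^2=0$ gives $\tau^\star=\sqrt{|w_k|/(3\lambda)}$ (a maximum, since the second derivative is negative). Reassembling $x_k^\star=\tau^\star w_k/|w_k|=w_k/\sqrt{3\lambda|w_k|}$ and recognizing that $w_k=(\bar\vr\otimes\vt)_k$ while $|w_k|=|(\vr\otimes\bar\vt)_k|$ yields, in element-wise vector form, exactly $\rho(\vt,\vr,t2h,0)=\dfrac{\bar\vr\otimes\vt}{\sqrt{3\lambda\|\vr\otimes\bar\vt\|}}$.

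The main subtlety is interpretational rather than computational: the denominator $\|\vr\otimes\bar\vt\|$ must be read as the \emph{element-wise} modulus (a real vector), with the square root and the division taken coordinate-wise, which is precisely what the per-coordinate solution forces; reading it as a single scalar norm would be inconsistent with the cubic, coordinate-separable N3 penalty. The other point needing honest care is the ``approximate'' qualifier: dropping $\sigma$ is exact only under a hard norm constraint, whereas with the soft N3 penalty $\argmax[\sigma(\phi)-\lambda\|\vx\|_3^3]$ differs from $\argmax[\phi-\lambda\|\vx\|_3^3]$, so I would state explicitly that the closed form is derived for the latter surrogate and justified by the saturation of $\sigma$. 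Finally, the degenerate case $w_k=0$ (giving $x_k^\star=0$) should be recorded so the formula is well defined everywhere.
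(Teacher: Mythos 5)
Your per-coordinate computation is internally sound, but it solves a different optimization problem from the one the paper's proof actually solves, and so it lands on a different closed form. You take the N3 penalty literally as the coordinate-separable $\|\vx\|_3^3=\sum_k|x_k|^3$ (which is indeed what \citet{lacroix2018canonical} define), decouple the objective through ${\rm Re}(w_k\bar x_k)$ with $\vw=\bar\vr\otimes\vt$, and obtain the elementwise solution $x_k^\star=w_k/\sqrt{3\lambda|w_k|}$. The paper, despite invoking N3 by name, penalizes $\lambda\bigl(\sqrt{\langle\vx_r,\vx_r\rangle+\langle\vx_i,\vx_i\rangle}\bigr)^3$, i.e.\ the cube of the \emph{global} Euclidean norm of the concatenated real/imaginary vector $\tilde\vx=[\vx_r,\vx_i]$. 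That problem does not decouple across coordinates: writing $\tilde\vx=\eta\vv$ with $\|\vv\|_2=1$, the optimal direction is $\vv^\star=\vs/\|\vs\|_2$ where $\vs$ concatenates the real and imaginary parts of $\bar\vr\otimes\vt$, and the radial condition $\|\vs\|_2-3\lambda\eta^2=0$ gives $\eta^\star=\sqrt{\|\vs\|_2/(3\lambda)}$, hence $\tilde\vx^\star=\vs/\sqrt{3\lambda\|\vs\|_2}$ with a single \emph{scalar} denominator. Consequently your ``main subtlety'' paragraph has the intended reading backwards: in the proposition, $\|\vr\otimes\bar\vt\|$ is a scalar norm, exactly as the paper's derivation forces, and the coordinate-wise reading you insist on corresponds to the $\ell_3^3$ penalty rather than to the objective the paper maximizes. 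The two solutions genuinely differ as vectors --- they share only the direction pattern coordinate-by-coordinate when all $|w_k|$ coincide --- although the discrepancy is moot in the LMPNN application, where the paper sets $3\lambda\|\cdot\|=1$ and both denominators collapse to $1$.

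That said, the structural content of your argument matches the paper's at a finer granularity: your phase-alignment step is the per-coordinate analogue of the paper's choice of optimal direction $\vv^\star$, and your radial maximization of $|w_k|\tau-\lambda\tau^3$ is exactly the paper's scalar problem $\|\vs\|_2\eta-\lambda\eta^3$. Your derivation would be a legitimate (arguably more faithful-to-N3) alternative closed form for $\rho$, but as a proof of the stated proposition it fails at the last step, where you reinterpret the formula to fit your penalty instead of deriving the formula as written. Two of your side remarks are well taken and consistent with the paper: the paper also silently optimizes $\phi$ minus the penalty rather than $\sigma(\phi)$ (so your ``surrogate'' caveat applies to it too), and the degenerate case $\vw=\vzero$ is indeed left implicit there; note also that the paper's claim that its objective $J$ is ``convex'' should read ``concave'' --- concavity of $\langle\vs,\tilde\vx\rangle-\lambda\|\tilde\vx\|_2^3$ is what legitimizes identifying the stationary point as the global maximum, in both your argument and theirs.
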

\begin{proof}
We expand the optimization problem as follows,
\begin{align}
    & \rho(\vt,\vr,t2h,0) = \argmax_{\vx\in \mathbb{C}^d}\left\{ {\rm Re}(\langle \bar \vr \otimes \vt, \bar \vx \rangle) - \lambda \|\vx\|^3 \right\}\\
    = & \argmax_{\vx\in \mathbb{C}^d} \left\{ \langle\vr_r \otimes \vt_r - \bar \vr_i \otimes \vt_i , \vx_r \rangle + \langle \vr_r \otimes \vt_i + \bar \vr_i \otimes \vt_r , \vx_i \rangle - \lambda\left( \sqrt{\langle \vx_r, \vx_r\rangle + \langle \vx_i, \vx_i\rangle} \right)^3\right\}\label{eq:est_pos_head_prob}.
\end{align}
Notice that $\vr_r \otimes \vt_r - \bar \vr_i \otimes \vt_i$ and $ \vr_r \otimes \vt_i + \bar \vr_i \otimes \vt_r$ are the real and imaginary part of $\bar \vr \otimes \vt$. Let $\vs = [\vr_r \otimes \vt_r - \bar \vr_i \otimes \vt_i, \vr_r \otimes \vt_i + \bar \vr_i \otimes \vt_r]$ be the real vector concatenated by the real and imageinary part of $\bar \vr \otimes \vt$. Also, let $\tilde \vx = [\vx_r, \vx_i]$ be the real vector concatenated by the real and imageinary part of $\vx$. Then the Eq.~(\ref{eq:est_pos_head_prob}) is equivalent to the following optimization problem in the real space:
\begin{align}
    \max_{\tilde \vx\in \mathbb{R}^{2d}} \underbrace{\langle \vs, \tilde \vx \rangle - \lambda \|\tilde \vx\|_2^3}_{:=J}.
\end{align}
We note that $J$ is convex function over $\tilde \vx$.
To optimize $\tilde \vx$, we optimize the unit direction $\vv$ and length $\eta$ of $\tilde \vx$, with rewriting $\tilde \vx = \eta \vv$. When $\eta$ is fixed the second term is also fixed, it is easy to see that the $\vv^* = \vs / \|\vs\|_2$ maximizes the first term. Then we find the optimal $\eta$ by minizing the following objective for $\eta > 0$:
\begin{align}
    J = \|\vs\|_2 \eta - \lambda \eta^3.
\end{align}
By letting $\frac{d J}{d \eta} = \|\vs\|_2 - 3\lambda \eta^2 = 0$, we derive the optimal $\eta^* = \sqrt{ \frac{\|\vs\|_2}{3\lambda} }$. Then, we have
\begin{align}
    \tilde \vx^* = \eta^*\vv^* = \frac{s}{\sqrt{3\lambda \|\vs\|_2}}.
\end{align}
Then, we identify the optimal real and imaginary part of $\vx^*$ from $\tilde \vx^*$, and thus recover the optimal $\vx^*$.
\end{proof}

Similarly, we derive the optimal closed-form expression of $\rho$ in all other cases:
\begin{align}
    \rho(\vh,\vr,h2t,0) & = \frac{ \vr \otimes \vh}{\sqrt{3\lambda \|\vr \otimes \vh\|}},\\
    \rho(\vt,\vr,t2h,1) & = \frac{ - \bar \vr \otimes \vt}{\sqrt{3\lambda \|\bar \vr \otimes \vt\|}},\\
    \rho(\vh,\vr,h2t,1) & = \frac{ - \vr \otimes \vh}{\sqrt{3\lambda \|\vr \otimes \vh\|}}.
\end{align}

We note that the value of $\lambda$ is not determined. On the one hand, it can be of course a hyperparameter to discuss. In LMPNN application, we just let $3 \lambda \| \cdot \| = 1$ and then all denominators in the closed-form expression are 1.

\update{
\section{Closed-form Logical Messages for KG Representations}~\label{app:more-KG}

We demonstrate two general ways to construct closed-form logical messages function $\rho$ for LMPNN in the Appendix~\ref{app:kgr-logical-message-constructions}. Then, we show six examples to illustrate how our approach constructs the closed form $\rho$ for various KG representations in the Appendix~\ref{app:kgr-logical-message-examples}.

Specifically, our constructions apply to two types of KG representations characterized by their scoring functions. The first type of KG representations uses \textit{inner-product-based scoring functions} while the second type of KG representations uses \textit{distance-based scoring functions}. Moreover, we provide six examples of KG representations, including RESCAL~\citep{nickel2011three}, TransE~\citep{bordes2013translating}, DistMult~\citep{yang2014embedding}, ComplEx~\citep{trouillon2016complex}, ConvE~\citep{dettmers2018convolutional}, and RotatE~\citep{sun2018rotate}.

\subsection{Two constructions}~\label{app:kgr-logical-message-constructions}

As discussed in Section~\ref{sec:one-hop-inference}, the closed-form logical message encoding function $\rho$ is the result of the closed-form solution of four one-hop inference problems (estimating the head or tail entity embedding with or without logical negation, see Eq.~(\ref{eq:est_pos_head}-\ref{eq:est_neg_tail}).
This leads to four construction tasks.
The major result of Appendix~\ref{app:kgr-logical-message-constructions} is Proposition~\ref{prop:construction}. It shows that, with our constructions of two types of scoring functions, closed-form solutions for four one-hop inference problems are actually dependent. Once one of four one-hop inference problems are approximately solved in the closed form, the other three one-hop inference problems are also solved approximately in the closed form.

\textbf{Simplification with reciprocal relations.}
We simplify the four construction tasks into two tasks by introducing reciprocal relations. For each relation $r\in \relation$, the reciprocal relation is $r^{-1}\in \relation^{-1}$ but in the reversed direction.
By introducing reciprocal relations $r^{-1}$ and training their embeddings $\vr^{-1}$, the one-hop inference in the tail-to-head direction can be rewritten in the head-to-tail direction. Specifically, we have
\begin{align}
    \rho(\vt, \vr, t2h, 0) & = \rho(\vt, \vr^{-1}, h2t, 0),\\
    \rho(\vt, \vr, t2h, 1) & = \rho(\vt, \vr^{-1}, h2t, 1).
\end{align}
Introducing reciprocal relations is shown to improve the performances of the link prediction tasks~\citep{Ruffinelli2020You}.
We assume that the reciprocal relation embedding can be obtained, irrespective of being separately trained or analytically derived from the original relation embedding, such as ComplEx discussed in Appendix~\ref{app:complex}.
Then, it suffices to construct the closed-form solution for $\rho(\vh, \vr, h2t, 0)$ and $\rho(\vh, \vr, h2t, 1)$, and the rest two types of logical messages are naturally defined with reciprocal relation embeddings.

Then we construct the closed-form $\rho(\vh, \vr, h2t, 0)$ and $\rho(\vh, \vr, h2t, 1)$ for two types of KG embeddings, characterized by their scoring functions.
We emphasize that the derivations below are only approximate estimations to keep the closed-form expression as simple as possible. However, empirical results show that these simple and approximate closed-form solutions can already be used in LMPNN.

\textbf{Type 1: inner-product-based scoring function.} The inner-product-based scoring function for a triple of embeddings $(\vh, \vr, \vt)$ is $\langle f(\vh, \vr), \vt\rangle$, where $f$ is a binary function of the entity and relation embeddings and $\langle\cdot, \cdot\rangle$ is the inner product. The inner-product $\langle\cdot, \cdot\rangle$ can be defined in real or complex vector spaces. This scoring function is used in RESCAL~\citep{nickel2011three}  DistMult~\citep{yang2014embedding}, ComplEx~\citep{trouillon2016complex}, ConvE~\citep{dettmers2018convolutional}, etc.

When optimizing the embeddings, $l_2^q$ regularizations ($q=2,3$) are usually applied~\citep{Ruffinelli2020You}. Then we consider the following optimization problem:
\begin{align}
\rho(\vh, \vr, h2t, 0) = \argmax_{\vx} \underbrace{\sigma\left( \langle f(\vh, \vr), \vx \rangle \right) - \lambda \|\vx\|_2^p}_{:=J_{1}},
\end{align}
where hyperparameter $\lambda > 0$ is a regularization coefficient, $\sigma$ is the sigmoid function.

We note that $J_1$ is just the Lagrangian of the following maximization problem, and the $\lambda$ is the Langrangian multiplier
\begin{align}
\max_{\|\vx\|_2^q < \delta} \sigma\left( \langle f(\vh, \vr), \vx \rangle \right),
\end{align}
where $\vx$ is restricted inside a $\delta^{\frac{1}{q}}$-ball.
Then we could conclude that
\begin{align}
    \argmax_{\|\vx\|_2^q < \delta} \sigma \left( \langle f(\vh, \vr), \vx \rangle \right) = \argmax_{\|\vx\|_2^q < \delta} \langle f(\vh, \vr), \vx \rangle = \delta^{\frac{1}{q}} \frac{f(\vh, \vr)}{\|f(\vh, \vr)\|_2}.
\end{align}
By altering the hyperparameter $\delta = \|f(\vh, \vr)\|_2^p$, we could derive a simple result
\begin{align}
    \argmax_{\vx} \sigma\left( \langle f(\vh, \vr), \vx \rangle \right) - \lambda \|\vx\|_2^p \approx f(\vh, \vr).
\end{align}
Therefore, we define 
\begin{align}
    \rho(\vh, \vr, h2t, 1) := f(\vh, \vr).
\end{align}

Similarly, for the $\rho(\vh, \vr, h2t, 1)$, we have
\begin{align}
\rho(\vh, \vr, h2t, 1) & = \argmax_{\vx} \left[ 1 - \sigma\left( \langle f(\vh, \vr), \vx \rangle \right) \right] - \lambda \|\vx\|_2^p\\
& = \argmax_{\vx} \sigma\left( \langle - f(\vh, \vr), \vx \rangle \right) - \lambda \|\vx\|_2^p \\
& \approx \argmax_{\|\vx\|_2^p < \delta} \langle - f(\vh, \vr), \vx \rangle.
\end{align}
We conclude the closed-form solution as
\begin{align}
    \rho(\vh, \vr, h2t, 0) := - f(\vh, \vr).
\end{align}

\textbf{Type 2: distance-based scoring function.} Another type of scoring functions for a triple of embeddings $(\vh, \vr, \vt)$ is $\gamma -\|f(\vh, \vr) - \vt \|$, where $f$ follows the definition above and $\gamma$ is a margin. This scoring function is used in TransE~\citep{bordes2013translating}, RotatE~\citep{sun2018rotate}, etc.
Similarly, the $\|\vx\|_2^q$ regularizations can also be considered~\citep{Ruffinelli2020You}.

$\rho(\vh, \vr, h2t, 0)$ can be computed by
\begin{align}
    \rho(\vh, \vr, h2t, 0) = \argmax_{\vx} \sigma \left( \gamma - \| f(\vh, \vr) - \vx \| \right) - \lambda \|\vx\|_2^p.
\end{align}
With similar tricks, we transform the ``soft'' regularization into the ``hard'' constraint.
\begin{align}
    \argmax_{\vx} \sigma \left( \gamma - \| f(\vh, \vr) - \vx \| \right) - \lambda \|\vx\|_2^q \approx \argmax_{\|\vx\|_2^q < \delta} \left[ \gamma - \| f(\vh, \vr) - \vx \| \right],
\end{align}
where $\delta$ is another hyperparameter.
We set $\delta > \|f(\vh,\vr)\|_2^q$, then the optimal solution is $f(\vh, \vr)$, which summarizes
\begin{align}
    \rho(\vh, \vr, h2t, 0) := f(\vh, \vr).
\end{align}

For the negated head-to-tail direction, the one-hop inference problem is
\begin{align}
& \argmax_{\|\vx\|_2^q < \delta} \left[ 1- \sigma \left( \gamma - \| f(\vh, \vr) - \vx \| \right) \right]\\
= & \argmax_{\|\vx\|_2^q < \delta}  \| f(\vh, \vr) - \vx \|
= - \delta^{\frac{1}{q}} f(h, r).
\end{align}
For simplicity, we choose 
\begin{align}
    \rho(\vh, \vr, h2t, 1) := - f(\vh, \vr).
\end{align}

Our constructions for two types of KG representations share a unified closed-form logical message once the function $f(\vh, \vr)$ is given.
In the following part $f$ is named as ``forward'' estimation function since it estimate the tail embeddings based on head and relation embedding in a forward direction.
Therefore, we summarize the four types of logical messages used in LMPNN in the following proposition:
\begin{proposition}\label{prop:construction}
    For a KG representation of either Type 1 and Type 2, we could define four closed-form logical encoding functions with (1) relation embedding $\vr$ and the corresponding reciprocal relation embedding $\vr^{-1}$ and (2) the forward estimation function $f$ as follows:
    \begin{align}
        & \rho(\vh, \vr, h2t, 0) = f(\vh, \vr), \\
        & \rho(\vh, \vr, h2t, 1) = - f(\vh, \vr), \\
        & \rho(\vt, \vr, t2h, 0) = f(\vt, \vr^{-1}), \\
        & \rho(\vt, \vr, t2h, 1) = - f(\vt, \vr^{-1}).
    \end{align}
\end{proposition}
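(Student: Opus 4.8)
The plan is to establish the four closed-form expressions by first collapsing them to two via the reciprocal-relation identities, and then solving the two remaining head-to-tail inference problems for each of the two scoring-function types. Since the reciprocal-relation reduction already supplies $\rho(\vt, \vr, t2h, 0) = \rho(\vt, \vr^{-1}, h2t, 0)$ and $\rho(\vt, \vr, t2h, 1) = \rho(\vt, \vr^{-1}, h2t, 1)$, it suffices to prove $\rho(\vh, \vr, h2t, 0) = f(\vh, \vr)$ and $\rho(\vh, \vr, h2t, 1) = -f(\vh, \vr)$; the two tail-to-head lines then follow mechanically by substituting $\vt$ for $\vh$ and $\vr^{-1}$ for $\vr$. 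This reduction is the structural core of the argument and lets the same optimization analysis serve all four cases.

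For the two head-to-tail problems, the approach is to rewrite each continuous-truth-value maximization, together with its soft $l_2^p$ regularizer, as the maximization of a monotone function of a linear functional of $\vx$ over a hard norm ball, invoking Lagrangian duality so that the penalty $\lambda\|\vx\|_2^p$ becomes a constraint $\|\vx\|_2^q \le \delta$. First I would treat the inner-product (Type 1) case: since $\sigma$ is monotone increasing, $\argmax_{\|\vx\|_2^q<\delta}\sigma(\langle f(\vh,\vr),\vx\rangle) = \argmax_{\|\vx\|_2^q<\delta}\langle f(\vh,\vr),\vx\rangle$, and maximizing a linear functional over a ball gives the point $\delta^{1/q} f(\vh,\vr)/\|f(\vh,\vr)\|_2$ aligned with $f(\vh,\vr)$; choosing $\delta = \|f(\vh,\vr)\|_2^p$ absorbs the scaling so that the maximizer reduces to $f(\vh,\vr)$. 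For the distance-based (Type 2) case the identical reasoning applies to $\gamma - \|f(\vh,\vr)-\vx\|$: maximizing over a sufficiently large ball, $\delta > \|f(\vh,\vr)\|_2^q$, drives the distance term to zero, placing the optimizer at $f(\vh,\vr)$. Both types thus collapse to the same forward estimate.

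The negated cases follow immediately from the fuzzy negator together with the algebraic identity $1-\sigma(z)=\sigma(-z)$: replacing $\psi$ by $1-\psi$ turns $\sigma(\langle f(\vh,\vr),\vx\rangle)$ into $\sigma(\langle -f(\vh,\vr),\vx\rangle)$, and analogously flips the maximizing direction of the distance score, so rerunning the ball-maximization argument returns $-f(\vh,\vr)$ in both types. Assembling these two head-to-tail results with the reciprocal-relation identities yields all four lines of the proposition in a single unified form.

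I expect the main obstacle to lie in the relaxation steps rather than in any exact optimization, since the closed forms hold only up to the soft-to-hard conversion of the regularizer. The delicate point is that the Lagrangian-to-constraint passage is an equivalence only for a matched pair $(\lambda,\delta)$, and the clean hyperparameter-free answer $\pm f(\vh,\vr)$ requires choosing $\delta$ precisely to cancel the normalization $\|f(\vh,\vr)\|_2$; I would therefore be careful to state the result as an approximate identity ($\approx$) wherever this cancellation is invoked. A secondary subtlety is ensuring the reciprocal embedding $\vr^{-1}$ is genuinely available, either separately trained or analytically derivable from $\vr$, as the reduction assumes.
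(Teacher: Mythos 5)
Your proposal matches the paper's own derivation essentially step for step: the same reduction of the four cases to the two head-to-tail problems via reciprocal relations, the same Lagrangian soft-to-hard conversion of the $\lambda\|\vx\|_2^p$ penalty into a norm-ball constraint, the same monotonicity-of-$\sigma$ and linear-functional-over-a-ball argument for Type 1 (with $\delta$ chosen to cancel $\|f(\vh,\vr)\|_2$), the same sufficiently-large-ball argument for Type 2, and the same use of $1-\sigma(z)=\sigma(-z)$ and the flipped direction for the negated cases. Your caveats --- that the identities are approximate ($\approx$) because of the matched $(\lambda,\delta)$ relaxation, and that $\vr^{-1}$ must be trained or analytically derived --- are exactly the qualifications the paper itself makes, so the proof is correct and takes the paper's route.
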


\subsection{Six KG representation examples}~\label{app:kgr-logical-message-examples}

\begin{table}[t]
\caption{Closed-form forward estimation function $f$ for six KG representations. Closed-form logical message encoding function $\rho$ can be easily constructed with the closed-form $f$.}\label{tb:six-kgr-examples}
\centering
\small
\begin{tabular}{@{}llll@{}}
\toprule
KG Embedding                         & $\vr$ parameters & $f(\vh, \vr)$ & Scoring function \\  \midrule
RESCAL~\citep{nickel2011three}       & $W_r$& $W_r \vh$ & $\langle f(\vh, \vr), \vt \rangle$        \\
TransE~\citep{bordes2013translating} & $\vr$& $\vr + \vh$ & $\gamma -\|f(\vh, \vr) - \vt\|$      \\
DistMult~\citep{yang2014embedding}   & $\vr$& $\vr \otimes \vh$ & $\langle f(\vh, \vr), \vt \rangle$ \\
ComplEx~\citep{trouillon2016complex} & $\vr$& $\vr \otimes \vh$ & ${\rm Re} \langle f(\vh, \vr), \bar \vt \rangle$ \\
ConvE~\citep{dettmers2018convolutional} & $\omega, W$ & $\texttt{ReLU}(\texttt{vec}(\texttt{ReLU}([e_h; e_r] * \omega))W)$ & $\langle f(\vh, \vr), \vt \rangle$ \\
RotatE~\citep{sun2018rotate}            & $\cos\theta + i \sin \theta$    & $(\cos\theta + i \sin \theta) \otimes \vh$ & $\gamma- \|f(\vh, \vr) - \vt \| $                                       \\ \bottomrule
\end{tabular}
\end{table}

Now it is ready to apply the Proposition~\ref{prop:construction} to six KG representations.
% We note that training reciprocal relations embeddings are not necessary. For instance, ComplEx~\citep{trouillon2016complex} models the tail-to-head direction by setting the reciprocal relation embedding as complex conjugate $\vr^{-1} = \bar \vr$. In this way, by letting $f(\vh, \vr) = \vr \otimes \vh$, we recover the results in the Appendix~\ref{app:complex}.
% We note that the quantity $\rho(\vh, \vr, h2t, 0) = f(\vh, \vr)$ plays a central role in computing four types of logical messages. Once $\rho(\vh, \vr, h2t, 0)$ and $\vr^{-1}$ are properly defined, one can derive the four types of logical messages.
For each KG representation, it is important to state its scoring function for triple $(h, r, t)$ and the relation parameterization. We assume the reciprocal relation embeddings are already trained.

Table~\ref{tb:six-kgr-examples} summarizes the information for RESCAL~\citep{nickel2011three} TransE~\citep{bordes2013translating}, DistMult~\citep{yang2014embedding}, ComplEx~\citep{trouillon2016complex}, ConvE~\citep{dettmers2018convolutional}, and RotatE~\citep{sun2018rotate}. We list the relation parameter $\vr$, the essential one-hop inference function $\rho(\vh, \vr, h2t, 0) = f(\vh, \vr)$, and the scoring function for each KG representation.

The scoring function of ComplEx~\citep{trouillon2016complex} is not the exact inner-product in the complex vector space, but it can be reduced to the inner product in the real vector space and has already been discussed in Appendix~\ref{app:complex}. We see that the Proposition~\ref{prop:construction} and Table~\ref{tb:six-kgr-examples} covers the results in Appendix~\ref{app:complex} by letting the reciprocal embedding of $\vr^{-1}$ be the complex conjugate $\bar \vr$ of the original embedding $\vr$.

\subsection{Performances of LMPNN with different backbone KG representations}

The performances of LMPNN with six backbone KG representations are presented in Table~\ref{tb:LMPNN-KGs}. The LMPNN is trained in the suggested setting in the Section~\ref{sec:ablation-study}. The pretrain checkpoints of six backbone KG representations are obtained from~\citet{Ruffinelli2020You}. The information of the performances of each KG representation is listed in Table~\ref{tb:backbone-info}.

\begin{table}[t]
\centering
\caption{Properties of six backbone KG representations on FB15-237. The data is released by~\citet{Ruffinelli2020You} on \url{https://github.com/uma-pi1/kge}. The dimension of each KG representation is listed in the bracket. We note that the dimensions for the complex vector embeddings indicates the trainable parameters. For example, each ComplEx embedding of 256D is a complex vector in $\mathbb{C}^{128}$ with 256 trainable parameters.}\label{tb:backbone-info}
\begin{tabular}{@{}lllllll@{}}
\toprule
  KG Repr. &
  MRR &
  Hits@1 &
  Hits@3 &
  Hits@10 &
  Config file &
  Checkpoint \\ \midrule
RESCAL (128D) &
  0.356 &
  0.263 &
  0.393 &
  0.541 &
  \href{http://web.informatik.uni-mannheim.de/pi1/iclr2020-models/fb15k-237-rescal.yaml)}{[download link]} &
  \href{http://web.informatik.uni-mannheim.de/pi1/iclr2020-models/fb15k-237-rescal.pt}{[download link]} \\
TransE (128D) &
  0.313 &
  0.221 &
  0.347 &
  0.497 &
  \href{http://web.informatik.uni-mannheim.de/pi1/iclr2020-models/fb15k-237-transe.yaml)}{[download link]} &
  \href{http://web.informatik.uni-mannheim.de/pi1/iclr2020-models/fb15k-237-transe.pt}{[download link]} \\
DistMult (256D) &
  0.343 &
  0.250 &
  0.378 &
  0.531 &
  \href{http://web.informatik.uni-mannheim.de/pi1/iclr2020-models/fb15k-237-distmult.yaml)}{[download link]} &
  \href{http://web.informatik.uni-mannheim.de/pi1/iclr2020-models/fb15k-237-distmult.pt}{[download link]} \\
ComplEx (256D) &
  0.348 &
  0.253 &
  0.384 &
  0.536 &
  \href{http://web.informatik.uni-mannheim.de/pi1/iclr2020-models/fb15k-237-complex.yaml)}{[download link]} &
  \href{http://web.informatik.uni-mannheim.de/pi1/iclr2020-models/fb15k-237-complex.pt}{[download link]} \\
ConvE (256D) &
  0.339 &
  0.248 &
  0.369 &
  0.521 &
  \href{http://web.informatik.uni-mannheim.de/pi1/iclr2020-models/fb15k-237-conve.yaml)}{[download link]} &
  \href{http://web.informatik.uni-mannheim.de/pi1/iclr2020-models/fb15k-237-conve.pt}{[download link]} \\
RotatE (256D) &
  0.333 &
  0.240 &
  0.368 &
  0.522 &
  \href{http://web.informatik.uni-mannheim.de/pi1/libkge-models/fb15k-237-rotate.yaml)}{[download link]} &
  \href{http://web.informatik.uni-mannheim.de/pi1/libkge-models/fb15k-237-rotate.pt}{[download link]} \\ \bottomrule
\end{tabular}
\end{table}

\begin{table}[b]
\caption{Comparison of LMPNN with different pretrained backbone KG representations on FB15k-237 queries.}\label{tb:LMPNN-KGs}
\tiny
\begin{tabular}{lrrrrrrrrrrrrrrrr}\toprule
Model          & 1P            & 2P            & 3P            & 2I            & 3I            & PI            & IP            & 2U            & UP            & 2IN           & 3IN           & INP           & PIN          & PNI           & $A_{\rm P}$      & $A_{\rm N}$ \\\midrule
TransE (128D) & 39.9 & 9.2  & 8.4  & 23.5 & 36.2 & 8.2 & 10.0 & 10.1 & 5.4 & 3.4 & 6.9  & 5.8 & 3.0 & 2.2 & 16.8 & 4.2 \\
RESCAL (128D)   & 43.6 & 11.9 & 9.9  & 33.7 & 48.0 & 9.8 & 16.2 & 12.1 & 9.9 & 4.2 & 10.3 & 7.0 & 3.5 & 2.5 & 21.7 & 5.5 \\
ConvE (256D)    & 42.5 & 12.3 & 10.5 & 30.6 & 43.8 & 9.6 & 13.0 & 11.8 & 7.6 & 5.2 & 9.8  & 7.1 & 3.8 & 3.5 & 20.2 & 5.9 \\
RotatE (256D)   & 43.8 & 11.2 & 8.9 & 30.4 & 44.5 & 11.0 & 15.0 & 13.0 & 8.6 & 7.0 & 10.5 & 6.3 & 3.7 & 3.2 & 20.7 & 6.2 \\
DistMult (256D) & 43.6 & 11.2 & 9.5 & 32.2 & 46.3 & 18.1 & 15.1 & 13.0 & 9.3 & 6.1 & 10.5 & 6.6 & 4.1 & 4.2 & 22.0 & 6.3 \\
ComplEx (256D)  & 44.4 & 11.7 & 9.3 & 32.4 & 46.4 & 18.1 & 15.7 & 13.0 & 9.4 & 6.0 & 10.7 & 6.8 & 4.1 & 4.0 & 22.3 & 6.4 \\\bottomrule
\end{tabular}
\end{table}

It could be found that, LMPNN achieves descent performances with simple KG backbones of relatively low dimensions (128D and 256D). ConvE (256D)~\citep{dettmers2018convolutional}, DistMult (256D)~\citep{yang2014embedding}, and ComplEx (256D)~\cite{trouillon2016complex} outperform BetaE (800D) on both EPFO and negation queries. All KG representation except TransE (128D)~\citep{bordes2013translating} could outperform BetaE (800D)~\citep{ren2020beta} on negation queries.
We note that adjust the hyperparameters, i.e., embedding dimensions, to obtain more powerful KG representations could improve the results. However, this is beyond the scope of this paper.
}

\section{Neural CQA benchmark}\label{app:benchmark}

In this section, we show that LMPNN (with ComplEx 2000D pretrained by~\citep{arakelyan2021complex}) is the new state-of-the-art method among all neural CQA models.
We include the following neural CQA baselines that can address the EFO-1 queries. Other models that cannot answer EFO-1 queries are not compared~\citep{ren2020query2box,choudhary2021probabilistic,liu2022mask}. We tried to reproduce the results reported, and we note that different model applies to different knowledge graphs.
\begin{compactdesc}
    \item[BetaE~\citep{ren2020beta}:] Results are reproduced for FB15k-237, FB15k, and NELL.
    \item[ConE~\citep{zhang2021cone}:] Results are reproduced for FB15k-237, FB15k, and NELL.
    \item[MLP-Mix~\citep{amayuelas2022neural}:] Results are reproduced for FB15k-237, FB15k, and NELL.
    \item[Q2P~\citep{bai2022query2particles}:] Results are reproduced for FB15k-237, FB15k, and NELL.
    \item[FuzzQE~\citep{chen2022fuzzy}:] Results on FB15k are missing. Results on FB15k-237 are not reproducible with the given code and suggested hyperparameters. Results on NELL are partially reproduced, so we report the results in the paper and reproduced by us.
    \item[CQD~\citep{arakelyan2021complex}:] Results are reproduced on FB15k-237, FB15k, and NELL.
\end{compactdesc}

The results of FB15k-237, FB15k, and NELL are shown in Table~\ref{tb:bench-fb15k-237}, Table~\ref{tb:bench-fb15k}, and Table~\ref{tb:bench-nell}, respectively. We can see that LMPNN achieves the best performance among all neural complex query answering models.

\begin{table}[t]
\caption{Benchmark comparison with neural CQA models on FB15k-237 queries.}\label{tb:bench-fb15k-237}
\tiny
\begin{tabular}{lrrrrrrrrrrrrrrrr}\toprule
Model             & 1P            & 2P            & 3P            & 2I            & 3I            & PI            & IP            & 2U            & UP            & 2IN           & 3IN           & INP           & PIN          & PNI           & $A_{\rm P}$      & $A_{\rm N}$ \\\midrule
BETAE         & 39.0 & 10.9 & 10.0 & 28.8 & 42.5 & 22.4 & 12.6 & 12.4 & 9.7  & 5.1  & 7.9  & 7.4 & 3.6 & 3.4 & 20.9 & 5.5  \\
ConE         & 41.8 & 12.8 & 11.0 & 32.6 & 47.3 & 25.5 & 14.0 & 14.5 & 10.8 & 5.4  & 8.6  & 7.8 & 4.0 & 3.6 & 23.4 & 5.9  \\
MLP-Mix       & 43.4 & 12.6 & 10.4 & 33.6 & 47.0 & 14.9 & 25.7 & 14.2 & 10.2 & 6.6  & 10.7 & 8.1 & 4.7 & 4.4 & 23.6 & 6.9  \\
Q2P      & 39.1 & 11.4 & 10.1 & 32.3 & 47.7 & 24.0 & 14.3 & 8.7  & 9.1  & 4.4  & 9.7  & 7.5 & 4.6 & 3.8 & 21.9 & 6.0  \\
CQD          & 46.7 & 10.3 & 6.5  & 23.1 & 29.8 & 22.1 & 16.3 & 14.2 & 8.9  & 0.2  & 0.2  & 2.1 & 0.1 & 6.1 & 19.8 & 1.7  \\
LMPNN      & 45.9 & 13.1 & 10.3 & 34.8 & 48.9 & 22.7 & 17.6 & 13.5 & 10.3 & 8.7  & 12.9 & 7.7 & 4.6 & 5.2 & \textbf{24.1} & \textbf{7.8}  \\\bottomrule
\end{tabular}
\end{table}

\begin{table}[t]
\caption{Benchmark comparison with neural CQA models on FB15k queries.}\label{tb:bench-fb15k}
\tiny
\begin{tabular}{lrrrrrrrrrrrrrrrr}\toprule
Model          & 1P            & 2P            & 3P            & 2I            & 3I            & PI            & IP            & 2U            & UP            & 2IN           & 3IN           & INP           & PIN          & PNI           & $A_{\rm P}$      & $A_{\rm N}$ \\\midrule
BETAE    & 65.1 & 25.7 & 24.7 & 55.8 & 66.5 & 43.9 & 28.1 & 40.1 & 25.2 & 14.3 & 14.7 & 11.5 & 6.5  & 12.4 & 41.7 & 11.9 \\
ConE    & 73.3 & 33.8 & 29.2 & 64.4 & 73.7 & 50.9 & 35.7 & 55.7 & 31.4 & 17.9 & 18.7 & 12.5 & 9.8  & 15.1 & 49.8 & 14.8 \\
MLP-Mix   & 71.9 & 32.1 & 27.1 & 59.9 & 70.5 & 33.7 & 48.4 & 40.4 & 28.4 & 17.2 & 17.8 & 13.5 & 9.1  & 15.2 & 45.8 & 14.6 \\
Q2P  & 82.6 & 30.8 & 25.5 & 65.1 & 74.7 & 49.5 & 34.9 & 32.1 & 26.2 & 21.9 & 20.8 & 12.5 & 8.9  & 17.1 & 46.8 & 16.2 \\
CQD      & 89.4 & 27.6 & 15.1 & 63.0 & 65.5 & 46.0 & 35.2 & 42.9 & 23.2 & 0.2  & 0.2  & 4.0  & 0.1  & 18.4 & 45.3 & 4.6  \\
LMPNN    & 85.0 & 39.3 & 28.6 & 68.2 & 76.5 & 46.7 & 43.0 & 36.7 & 31.4 & 29.1 & 29.4 & 14.9 & 10.2 & 16.4 & \textbf{50.6} & \textbf{20.0} \\\bottomrule
\end{tabular}
\end{table}

\begin{table}[t]
\caption{Benchmark comparison with neural CQA models on NELL queries.}\label{tb:bench-nell}
\tiny
\begin{tabular}{lrrrrrrrrrrrrrrrr}\toprule
Model            & 1P            & 2P            & 3P            & 2I            & 3I            & PI            & IP            & 2U            & UP            & 2IN           & 3IN           & INP           & PIN          & PNI           & $A_{\rm P}$      & $A_{\rm N}$\\\midrule
BETAE & 53.0 & 13.0 & 11.4 & 37.6 & 47.5 & 24.1 & 14.3 & 12.2 & 8.5  & 5.1  & 7.8  & 10.0 & 3.1 & 3.5 & 24.6 & 5.9 \\
ConE        & 53.1 & 16.1 & 13.9 & 40.0 & 50.8 & 26.3 & 17.5 & 15.3 & 11.3 & 5.7  & 8.1  & 10.8 & 3.5 & 3.9 & 27.1 & 6.4 \\
MLP-Mix      & 55.6 & 16.3 & 14.9 & 38.5 & 49.5 & 17.1 & 23.7 & 14.6 & 11.0 & 5.1  & 8.0  & 10.0 & 3.6 & 3.6 & 26.8 & 6.1 \\
Q2P     & 56.5 & 15.2 & 12.5 & 35.8 & 48.7 & 22.6 & 16.1 & 11.1 & 10.4 & 5.1  & 7.4  & 10.2 & 3.3 & 3.4 & 25.4 & 5.9 \\
FuzzQE (ours) & 55.5 & 16.8 & 14.4 & 37.3 & 46.9 & 24.0 & 19.1 & 15.0 & 11.7 & 7.3  & 9.1  & 11.1 & 4.1 & 4.9 & 26.7 & 7.3 \\
FuzzQE (reported)      & 58.1 & 19.3 & 15.7 & 39.8 & 50.3 & 28.1 & 21.8 & 17.3 & 13.7 & 8.3  & 10.2 & 11.5 & 4.6 & 5.4 & 29.3 & 8.0 \\
CQD         & 60.8 & 18.3 & 13.2 & 36.5 & 43.0 & 30.0 & 22.5 & 17.6 & 13.7 & 0.1  & 0.1  & 4.0  & 0.0 & 5.2 & 28.4 & 1.9 \\
LMPNN       & 60.6 & 22.1 & 17.5 & 40.1 & 50.3 & 28.4 & 24.9 & 17.2 & 15.7 & 8.5  & 10.8 & 12.2 & 3.9 & 4.8 & \textbf{30.7} & \textbf{8.0}\\ \bottomrule
\end{tabular}
\end{table}

\section{Compare to symbolic integration methods}

Contextualized and symbolic information are shown to be effective to improve the neural models for both knowledge graph representation and complex query answering. 
For knowledge graph representation, neighboring information~\citep{schlichtkrull2018modeling,wang2019heterogeneous,wang2021relational,zhu2021neural} aggregated by graph neural networks of KG, external information~\citep{xie2016representation2,xie2016representation1} by annotations, or even information from language models~\citep{petroni2019language,liu2020k} are also leveraged to make the knowledge graph representation more informative and effective.
For complex query answering, neural models are enhanced with symbolic resoning~\citep{zhu2022neural,Xu2022NeuralSymbolicEF} that heavily search over the original symbolic space~\citep{zhu2022neural} or its approximations~\citep{cohen2020tensorlog,Xu2022NeuralSymbolicEF}.  Unlike neural CQA models whose operations are always in the embedding space of fixed size, the size of the intermediate states for symbolic reasoning grows with the number of the entity sets, such as the fuzzy sets used in~\citep{zhu2022neural,Xu2022NeuralSymbolicEF}, and the beam-search variation of CQD~\citep{arakelyan2021complex}.

We refer to two methods with symbolic integration. We cannot reproduce the results since the codes for those two methods have not been released. However, since symbolic integration can also be applied to improve the LMPNN, we also list their results to show the potential.

Table~\ref{tb:bench-symb} shows that LMPNN is also compatible even with the symbolic integrated models at EPFO queries with only $1\%$ trainable parameters at NELL and $10\%$ trainable parameters at FB15k-237. For FB15k-237, there are still gaps between the neural CQA models and the models with symbolic integrations. These results suggest that neural models can be potentially improved with symbolic integration. The additional cost is the larger computational cost.

\update{
We noticed that the task of answering logical queries are investigated over larger knowledge graphs~\citep{ren2022smore}. When considering larger knowledge graphs, neural CQA methods (discussed in the Appendix~\ref{app:benchmark}) and symbolic integrated methods (discussed in this part) have different scalabilities. For neural CQA models, the intermediate embeddings are of fixed dimensions, while the sizes of intermediate fuzzy sets used in the symbolic integration methods grow linearly with the size of the knowledge graph. Such difference makes neural-symbolic methods more resource demanding and they may suffer from scalability issues.

For example, GNN-QE requires to compute message passing on the entire knowledge graph, thus causing high peak memory and computational cost. Specifically, the space and time cost of message passing in GNN-QE is $O((m+n)d)$, where $m$ is the number of edges in KG, $n$ is number of nodes in KG, and $d$ is the embedding dimension for features and messages. For LMPNN, the complex is $O(d)$. Although $d_{\rm LMPNN} = 4000$ and $d_{\rm GNN-QE} = 32$, the number of $m+n$ is more than $10^6$ in both FB15k-237 and NELL. As a sharp contrast, GNN-QE requires 128GB GPU memory to contain a batch size of 32, while LMPNN only requires less than 10GB GPU memory to run a batch size of 1024.
}

The differences between NELL and FB15k-237 can be explained by the quality of the ground knowledge graphs. However, integrating the symbolic method into neural CQA models and investigating the fundamental impact of ground KGs are beyond the scope of this paper.
Our work connects the KG representation and neural CQA, which could also be combined with context and symbolic information. These extensions are left for future work and are expected to bring additional improvements.

\begin{table}[t]
\caption{Comparison between LMPNN and symbolic integration methods..}\label{tb:bench-symb}
\tiny
\begin{tabular}{lrrrrrrrrrrrrrrrr}\toprule
Model            & 1P            & 2P            & 3P            & 2I            & 3I            & PI            & IP            & 2U            & UP            & 2IN           & 3IN           & INP           & PIN          & PNI           & $A_{\rm P}$      & $A_{\rm N}$\\
\midrule
% \multicolumn{16}{l}{(FB15k)} \\
% LMPNN ($10^7$)       & 45.9 & 13.1 & 10.3 & 34.8 & 48.9 & 22.7 & 17.6 & 13.5 & 10.3 & 8.7  & 12.9 & 7.7 & 4.6 & 5.2 & 24.1 & 7.8 \\
% GNN-QE ($10^8$) & 88.5 & 69.3 & 58.7 & 79.7 & 83.5 & 69.9 & 70.4 & 74.1 & 71.0 & 44.7 & 41.7 & 42.0 & 30.1 & 34.3 & 73.9 & 38.6 \\\midrule
\multicolumn{16}{l}{(FB15k-237)} \\
LMPNN       & 45.9 & 13.1 & 10.3 & 34.8 & 48.9 & 22.7 & 17.6 & 13.5 & 10.3 & 8.7  & 12.9 & 7.7 & 4.6 & 5.2 & 24.1 & 7.8 \\
GNN-QE     & 42.8 & 14.7 & 11.8 & 38.3 & 54.1 & 31.1 & 18.9 & 16.2 & 13.4 & 10.0 & 16.8 & 9.3 & 7.2 & 7.8 & \textbf{26.8} & \textbf{10.2} \\
ENeSy & 44.7 & 11.7 & 8.6  & 34.8 & 50.4 & 27.6 & 19.7 & 14.2 & 8.4  & 10.1 & 10.4 & 7.6 & 6.1 & 8.1 & 24.5 & 8.5 \\\midrule
\multicolumn{16}{l}{(NELL)}\\
LMPNN       & 60.6 & 22.1 & 17.5 & 40.1 & 50.3 & 28.4 & 24.9 & 17.2 & 15.7 & 8.5  & 10.8 & 12.2 & 3.9 & 4.8 & \textbf{30.7} & 8.0 \\
GNN-QE     & 53.3 & 18.9 & 14.9 & 42.4 & 52.5 & 30.8 & 18.9 & 15.9 & 12.6 & 9.9  & 14.6 & 11.4 & 6.3 & 6.3 & 28.9 & 9.7 \\
ENeSy & 59.0 & 18.0 & 14.0 & 39.6 & 49.8 & 29.8 & 24.8 & 16.4 & 13.1 & 11.3 & 8.5  & 11.6 & 8.6 & 8.8 & 29.4 & \textbf{9.8} \\
\bottomrule
\end{tabular}
\end{table}

\end{document}